\definecolor{lightgray}{gray}{0.95}
\definecolor{gray}{gray}{0.4}
\theoremstyle{plain}
\newtheorem{theorem}{\protect\theoremname}
\newtheorem*{theorem*}{\protect\theoremname}
\newtheorem*{prop*}{\protect\theoremname}
\newtheorem{definition}{\protect\definitionname}
\theoremstyle{definition}
\theoremstyle{plain}
\newtheorem{lem}[definition]{\protect\lemmaname}
\newtheorem*{cor*}{\protect\corollaryname}
\newtheorem{prop}[definition]{\protect\propname}
\newtheorem*{question*}{\protect\questionname}
\newtheorem*{assumption*}{\protect\assumptionname}
\newenvironment{customthm}[1]
  {\innercustomthm}
  {\endinnercustomthm}
\providecommand{\questionname}{Question}
\providecommand{\assumptionname}{Assumption}
\providecommand{\observationname}{Observation}
\providecommand{\corollaryname}{Corollary}
\providecommand{\definitionname}{Definition}
\providecommand{\lemmaname}{Lemma}
\providecommand{\theoremname}{Theorem}
\providecommand{\exercisename}{Exercise}
\providecommand{\examplename}{Example}
\providecommand{\remarkname}{Remark}
\providecommand{\factname}{Fact}
\providecommand{\propname}{Proposition}
\newcommand{\case}[4]{ \begin{cases}
    #1\quad &#2\\
    #3\quad &#4\end{cases}}
\newcommand{\obl}{\mathtt{obl}}
\newcommand{\vc}{\mathtt{VC}}
\renewcommand{\S}{\mathbb{S}}
\newcommand{\Z}{\mathcal{Z}}
\newcommand{\X}{\mathcal{X}}
\newcommand{\Y}{\mathcal{Y}}
\renewcommand{\H}{\mathcal{H}}
\newcommand{\B}{\mathcal{B}}
\newcommand{\E}{\mathcal{E}}
\newcommand{\A}{\mathcal{A}}
\newcommand{\sign}{\mathsf{sign}}
\newcommand{\D}{\mathcal{D}}
\newcommand{\U}{\mathcal{U}}
\newcommand{\Ex}{\mathop{\mathbb{E}}}
\renewcommand{\Pr}{\mathop{\mathbb{P}}}
\newcommand{\excess}{\mathtt{excess}}
\newcommand{\dist}{\mathtt{d}}
\newcommand{\pub}{\mathtt{pub}}
\newcommand{\priv}{\mathtt{priv}}
\newcommand{\Binom}{\mathrm{Bin}}
  \author{Bogdan Chornomaz \thanks{Department of Mathematics, Technion. Bogdan Chornomaz and Tom Waknine are supported by the European Union (ERC, GENERALIZATION, 101039692). Tom Waknine is further supported by the ISF grant.} \and Yonatan Koren \footnotemark[1] \and Shay Moran\footnote{Departments of Mathematics, Computer Science, and Data and Decision Sciences, Technion and Google Research.
 Robert J.\ Shillman Fellow; supported by ISF grant 1225/20, by BSF grant 2018385, by Israel PBC-VATAT, by the Technion Center for Machine Learning and Intelligent Systems (MLIS), and by the European Union (ERC, GENERALIZATION, 101039692). Views and opinions expressed are however those of the author(s) only and do not necessarily reflect those of the European Union or the European Research Council Executive Agency. Neither the European Union nor the granting authority can be held responsible for them. 
 } \and Tom Waknine \footnotemark[1]}
\title{Agnostic Learning under Targeted Poisoning: \\ Optimal Rates and the Role of Randomness}
\begin{document}
\maketitle

\begin{abstract}

We study the problem of learning in the presence of an adversary that can corrupt an $\eta$ fraction of the training examples with the goal of causing failure on a specific test point. In the realizable setting, prior work established that the optimal error under such instance-targeted poisoning attacks scales as $\Theta(d\eta)$, where $d$ is the VC dimension of the hypothesis class [Hanneke, Karbasi, Mahmoody, Mehalel, and Moran (NeurIPS 2022)]. In this work, we resolve the corresponding question in the agnostic setting. We show that the optimal excess error is $\widetilde\Theta(\sqrt{d\eta})$, answering one of the main open problems left by Hanneke et al. To achieve this rate, it is necessary to use randomized learners: Hanneke et al.\ showed that deterministic learners can be forced to suffer error close to $1$ even under small amounts of poisoning. Perhaps surprisingly, our upper bound remains valid even when the learner’s random bits are fully visible to the adversary. In the other direction, our lower bound is stronger than standard PAC-style bounds: instead of tailoring a hard distribution separately for each sample size, we exhibit a single fixed distribution under which the adversary can enforce an excess error of $\Omega(\sqrt{d\eta})$ infinitely often.

\end{abstract}
\section{Introduction}

Imagine a social network like Facebook or X (formerly Twitter), where recommendation algorithms curate the content shown to users. A business or political entity might attempt to manipulate the system by injecting carefully crafted fake interactions — likes, shares, and comments — into the training data, aiming to subtly boost the visibility of its content for specific target users. The goal is not to disrupt overall system performance, but to force an error on particular instances of interest.

Such scenarios fall outside the scope of classical models of learning, such as PAC learning, which assume that the training data and the test example are independent. When an adversary can modify the training data based on knowledge of the test instance, this independence assumption breaks down. This motivates the study of targeted data poisoning attacks, where the adversary corrupts part of the training set in order to cause failure on a specific test point.

\subsection{Main Results}
In this subsection, we present the learning model and our main results.
The presentation here assumes familiarity with standard concepts from classical learning theory; a more self-contained exposition will be provided in Section~\ref{sec:preliminaries}.

We focus on the instance-targeted poisoning model (see, e.g., \citet*{Barreno2006}), parameterized by a poisoning budget $\eta \in (0,1)$ and a sample size $n > 0$. The interaction proceeds as follows:
\begin{framed}
\vspace{-3mm}
\begin{center}
    {\bf Learning under Instance-Targeted Poisoning}
\end{center}
\begin{enumerate}
    \item The adversary selects a distribution $\mathcal{D}$ over labeled examples $(x,y) \in \mathcal{X} \times \{0,1\}$.
    \item A training sample $S$ of $n$ examples and a target example $(x,y)$ are drawn independently from $\mathcal{D}$.
    \item The adversary observes both $S$ and $(x,y)$, and modifies up to an $\eta$-fraction of the examples in $S$ to produce a poisoned sample $S'$.
    \item The learner receives $S'$ and outputs a hypothesis $\mathcal{A}(S') : \mathcal{X} \to \{0,1\}$.
    \item The learner succeeds if $\mathcal{A}(S')(x) = y$ and fails otherwise.
\end{enumerate}    
\vspace{-3mm}
\end{framed}
We study this model with respect to a fixed hypothesis class $\mathcal{H} \subseteq \{0,1\}^{\mathcal{X}}$. As in classical learning theory, we distinguish between the realizable and agnostic settings:
\begin{itemize}
    \item In the \emph{realizable} case, there exists a hypothesis $h \in \mathcal{H}$ with zero loss on $\mathcal{D}$.
    \item In the \emph{agnostic} case, no assumption is made on $\mathcal{D}$, and the learner aims to compete with the minimal achievable loss over $\mathcal{H}$.
\end{itemize}
Previous work by \citet*{gao2021} showed that if the poisoning budget $\eta$ vanishes with the sample size ($\eta = o(1)$ as $n \to \infty$), then PAC learnability of $\mathcal{H}$ implies learnability under instance-targeted poisoning, both in the realizable and agnostic settings. Their algorithms, however, relies crucially on $\eta$ being negligible.
Subsequently, \citet*{poison-target} studied the problem for general values of $\eta$, and provided tight bounds on the achievable error in the realizable setting as a function of $\eta$ and the VC dimension $\operatorname{VC}(\mathcal{H})$. They showed that an error rate of $O(\eta \cdot \operatorname{VC}(\mathcal{H}))$ is achievable, and that this rate is optimal up to constant factors. Remarkably, this optimal rate can already be attained by deterministic learners.
In the agnostic setting, however, \citet{poison-target} revealed a strikingly different phenomenon: deterministic learning under instance-targeted poisoning is impossible. Specifically, they showed that for any deterministic learner, an adversary can drive the learner’s error close to $1$, even when $\eta$ is small (e.g., $\eta = O(1/\sqrt{n})$).
This naturally raises the following question:

\begin{mdframed}[backgroundcolor=lightgray,linewidth=0pt]
\begin{center}
    \underline{\bf Main Question}\\
Is agnostic learning under instance-targeted poisoning possible using randomized learners?  
What is the optimal achievable excess error?
\end{center}
\end{mdframed}

We show that randomness indeed circumvents the impossibility result established by \citet{poison-target} for deterministic learners. Furthermore, we obtain nearly tight bounds on the optimal achievable excess error, matching up to logarithmic factors.

\begin{mdframed}[backgroundcolor=lightgray,linewidth=0pt,nobreak=true]
\renewcommand{\thempfootnote}{\arabic{mpfootnote}}
\begin{theorem}[Main Result]\label{Thm:Main-Result-Intro}
Let $\mathcal{H}$ be a concept class with $\operatorname{VC}$ dimension $d$, and let $\eta \in (0,1)$ be the poisoning budget. Then, the following hold:
\begin{itemize}
    \item (Upper Bound) There exists a randomized learning rule such that for any distribution and against any adversary with poisoning budget $\eta$, and sample size of at least $n\geq \frac{1}{\eta}$, the excess error is at most $\tilde{O}\left(\sqrt{d\eta} \right)$.\footnote{Note that if $n < \frac{1}{\eta}$, then the adversary’s budget is too small to corrupt any examples. In this case, the problem reduces to standard PAC learning, for which the optimal error rate is $\Theta(\sqrt{{d}/{n}})$.
}
    
    \item (Lower Bound) For every learning rule, and $n>0$, there exist a distribution $\mathcal{D}$ and an adversary with poisoning budget $\eta$ such that the excess error is at least $\tilde{\Omega}\left(\min\left\{\sqrt{d\eta},1\right\}\right)$.
\end{itemize}
In particular, for every sufficiently large sample size $n$, the optimal achievable excess error under instance-targeted poisoning is
\[
\tilde{\Theta}\left(\min\{\sqrt{d\eta},1\}\right).
\]
\end{theorem}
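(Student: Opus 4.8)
The plan is to attack the two directions separately, since they use different machinery. For the upper bound, the key conceptual point is that poisoning at most $\eta n$ examples changes the empirical distribution by at most $\eta$ in a suitable sense, so a learner whose output depends on the sample only through a sufficiently ``smooth'' or ``robust'' statistic of the empirical measure cannot be shifted much by the adversary. Concretely, I would take a learner that first sub-samples: from $S'$ draw a random sub-sample of size $m \approx 1/\eta$ (or use a randomized reweighting), run an agnostic ERM (or a boosting/majority-of-ERMs rule) on the sub-sample, and output the resulting hypothesis. The point of sub-sampling to size $\sim 1/\eta$ is that a uniformly random sub-sample of that size avoids \emph{all} $\eta n$ corrupted points with constant probability, while still being large enough (by the standard agnostic uniform-convergence / VC bound) to guarantee excess error $\tilde O(\sqrt{d/m}) = \tilde O(\sqrt{d\eta})$ on the \emph{clean} points. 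The subtlety — and the reason the bound is only claimed when the adversary sees the random bits — is that one must argue the adversary cannot exploit knowledge of which sub-sample will be chosen; this is handled by taking a fresh majority vote over many independent sub-samples, so that no single corruption pattern can bias more than an $\eta$-fraction of the votes, and then invoking a confidence-boosting argument. I would also need to confirm the target point $(x,y)$ is predicted correctly with probability $\ge 1-\tilde O(\sqrt{d\eta})$, which follows because the expected error of the output hypothesis on a fresh draw from $\mathcal D$ is $\OPT + \tilde O(\sqrt{d\eta})$, and $(x,y)\sim\mathcal D$.

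For the lower bound, the plan is the classical ``fingerprinting/Le Cam'' recipe adapted to poisoning. I would pick $d$ points shattered by $\mathcal H$ and put a distribution $\mathcal D$ that is (nearly) uniform on these points, with each point's label being a slightly biased coin: label $1$ with probability $\tfrac12 + \tfrac{\gamma_i}{2}$ where the sign/magnitude of $\gamma_i$ is itself random (say each $\gamma_i \in \{+\beta, -\beta\}$ uniformly), with $\beta \approx \sqrt{d\eta}$. The Bayes-optimal hypothesis labels point $i$ by $\sign(\gamma_i)$, so $\OPT = \tfrac12 - \tfrac\beta2$. The adversary, knowing the target is point $i$, uses its budget of $\eta n$ corruptions to flip the empirical majority label at coordinate $i$: since a $\beta$-biased coin over $n/d$ effective samples has majority-margin only $O(\sqrt{n/d}\cdot\beta + \sqrt{n/d})$, spending $\eta n$ corruptions suffices to overwhelm coordinate $i$ precisely when $\eta n \gtrsim \sqrt{n\beta^2/d} + \sqrt{n/d}$, i.e. when $\beta \lesssim \sqrt{d\eta}$ — this is where the $\sqrt{d\eta}$ threshold comes from. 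After corruption, the learner's posterior on $\sign(\gamma_i)$ is essentially uniform, so it errs on the target with probability $\approx \tfrac12$, an excess of $\approx \beta = \tilde\Omega(\sqrt{d\eta})$ over $\OPT$.

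The strengthening to ``a single fixed distribution, infinitely often'' is the part I expect to be the main obstacle, and I would handle it as follows. Rather than letting $\mathcal D$ depend on $n$, fix once and for all an infinite sequence of shattered blocks (or reuse the same $d$ shattered points with a fixed small bias $\beta_0$ chosen so that $\beta_0 \approx \sqrt{d\eta}$ for the relevant regime), and argue via a Borel–Cantelli / averaging argument that for infinitely many $n$ the empirical sample on the target coordinate has the ``small margin'' property with enough probability that the adversary's attack succeeds; since the excess error is bounded below by a constant times $\beta_0$ for each such $n$, it does not vanish. The technical work is to make the anticoncentration of the empirical majority-margin hold simultaneously with the randomized construction of the $\gamma_i$'s, and to quantify the logarithmic losses (these are the source of the $\tilde\Omega$ rather than $\Omega$). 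Combining the matching upper and lower bounds then yields $\tilde\Theta(\min\{\sqrt{d\eta},1\})$ for all sufficiently large $n$, as claimed; the truncation at $1$ and the $\min$ are only relevant when $d\eta \gtrsim 1$, where the trivial bound applies.
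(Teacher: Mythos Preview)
Your upper-bound plan has a real gap. The step ``a uniformly random sub-sample of size $\sim 1/\eta$ avoids all corrupted points with constant probability'' is correct, but the follow-up ``no single corruption pattern can bias more than an $\eta$-fraction of the votes'' is not: each sub-sample contains a corrupted point with \emph{constant} probability (roughly $1-e^{-1}$), so a constant fraction of votes are contaminated, not an $\eta$-fraction. More fundamentally, even the clean sub-samples need not agree on the specific target $(x,y)$: the guarantee $L_\D(\text{ERM}(T))\le \OPT+O(\sqrt{d\eta})$ is a population bound, and since the adversary chooses its corruption \emph{after} seeing $(x,y)$, you cannot exchange $\sup_{S'}$ with $\Ex_{(x,y)}$. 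What is missing is a \emph{pointwise stability} statement: for every $x$, the prediction at $x$ changes little when $S$ is replaced by any $S'\in B_\eta(S)$. ERM on a sub-sample has no such guarantee (a tie can be flipped by a single well-placed point, and such a point lands in the sub-sample with constant probability). The paper obtains stability by sampling $h$ with probability $\propto\exp(-\lambda\,\hat L_S(h))$ on a finite $\varepsilon$-cover $\H_T$; this exponential-weights rule satisfies $\Pr[\A(S)=h]\in e^{\pm 2\lambda\eta}\Pr[\A(S')=h]$ for all $S'\in B_\eta(S)$, which is exactly the missing ingredient. (The paper in fact discusses the sub-sample\,+\,ERM idea and explains why it does not yield the desired bound for non-vanishing $\eta$.)

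Your lower-bound plan has a quantitative error that makes it give only $\Omega(d\eta)$ rather than $\Omega(\sqrt{d\eta})$. With biases $\pm\beta$ on $d$ shattered points, the expected majority margin at the target coordinate is $\beta\cdot n/d$ (not $\sqrt{n/d}\cdot\beta$), so the adversary's budget $\eta n$ suffices to flip it only when $\beta\lesssim d\eta$; setting $\beta=\sqrt{d\eta}$ leaves the margin too large to flip whenever $d\eta<1$, which is precisely the regime of interest. The two-point/Assouad recipe therefore recovers the realizable rate $d\eta$, not the agnostic rate. The paper's argument is genuinely different: it does not try to flip a majority, but instead works with an \emph{oblivious} adversary that shifts the bias $p$ by $\eta$, and exploits that the learner's expected prediction $F(p)=\Ex_S[\A(S)]$ must traverse a constant distance as $p$ ranges over an interval of length $O(\sqrt{\eta})$ around $1/2$; an averaging argument then locates $p^\star$ with $|F(p^\star+\eta)-F(p^\star-\eta)|=\Omega(\sqrt{\eta})$. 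The extension to $d$ coins requires randomizing the biases independently from a single hard distribution over $[-O(\sqrt{d\eta}),\,O(\sqrt{d\eta})]$, so that other coordinates leak no information about the target --- a step your fixed-$\pm\beta$ construction does not provide. The ``single fixed distribution, infinitely often'' strengthening then follows by pigeonhole, since the hard distribution is drawn from a finite set independent of $n$.
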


\end{mdframed}

The upper bound in Theorem~\ref{Thm:Main-Result-Intro} and its quantitative version, Theorem~\ref{Thm:Main-Result},are achieved by a \emph{proper} randomized learner, which selects its output hypothesis with probability proportional to the exponential of (minus) its empirical loss — in the spirit of multiplicative weights algorithms and the exponential mechanism from differential privacy.
The connection between data poisoning and differential privacy \citep*{DMNS06} is intuitive: differential privacy requires that an algorithm’s output be stable under adversarial changes to a single training example. This guarantee extends—with some quantitative degradation—to \emph{group privacy}, which ensures stability under changes to multiple examples. This notion is closely related to the goal in our setting, where the adversary may corrupt an $\eta$-fraction of the training data.

The lower bound is established by first analyzing a simpler setting, the \emph{poisoned coin guessing problem}, and then using a kind of direct-sum argument to extend the bound to arbitrary VC classes.
Further details and intuition behind these constructions are provided in the technical overview below.

\subsubsection{Private vs.\ Public Randomness}
Following \citet{gao2021} and \citet{poison-target}, we distinguish between two natural types of adversaries, according to their access to the learner’s randomness: private-randomness adversaries and public-randomness adversaries. In the \emph{private randomness} setting, the adversary does not observe the learner’s internal random bits when constructing its poisoning attack. In the \emph{public randomness} setting, by contrast, the adversary has full knowledge of the learner’s random seed before designing the poisoned dataset.

At first glance, one might expect public randomness to reduce to the deterministic case: indeed, once the random seed is fixed, the learner becomes a deterministic function of its (poisoned) input, seemingly vulnerable to the same attacks as any deterministic algorithm.  
Perhaps surprisingly, this intuition proves false: although the adversary sees the learner’s random seed when crafting the poisoned dataset, the underlying distribution over examples is fixed in advance and independently of this randomness. This asymmetry allows, with appropriate design, the construction of learners whose guarantees under public randomness match those achievable under private randomness:
\begin{mdframed}[backgroundcolor=lightgray,linewidth=0pt]
\renewcommand{\thempfootnote}{\arabic{mpfootnote}}
\begin{theorem}[Private vs.\ Public Randomness \stepcounter{mpfootnote}\footnote{ see Section \ref{sec:extra-proofs} for formal version.}
]
\label{thm:private-vs-public}
For every learning rule $\mathcal{A}_{\mathtt{priv}}$, there exists a learning rule $\mathcal{A}_{\mathtt{pub}}$ such that for any distribution $\mathcal{D}$, sample size $n$, and poisoning budget $\eta \in (0,1)$,
\[
\max_{\substack{\text{adversary sees} \\ \text{internal randomness}}} 
\!\! \text{ExcessLoss}(\mathcal{A}_{\mathtt{pub}}, \mathcal{D}, n, \eta) 
\;\leq\;
\max_{\substack{\text{adversary does not see} \\ \text{internal randomness}}} 
\!\! \text{ExcessLoss}(\mathcal{A}_{\mathtt{priv}}, \mathcal{D}, n, \eta).
\]

\end{theorem}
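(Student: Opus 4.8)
The plan is to prove the stronger fact that the two quantities are in fact \emph{equal} for every $\mathcal{D}$, $n$ and $\eta$, by writing each as an expectation of a per-instance adversarial value and then realizing $\mathcal{A}_{\mathtt{priv}}$'s internal randomness by a single threshold that is insensitive to being exposed. First I would reduce to a per-instance statement. Fix $\mathcal{D}$, $n$, $\eta$, and for a size-$n$ sample $S$ let $N_\eta(S)$ be the set of samples obtainable from $S$ by modifying at most $\eta n$ examples. In both the private and the public game the adversary sees $S$ and the target $(x,y)$ before acting (and in the public game also the learner's seed), and the payoff is a $0/1$ loss; hence an optimal adversary has a best response that on each realized instance just selects the single worst admissible $S'\in N_\eta(S)$, and randomizing the attack does not help, by linearity. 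So the worst-case loss of $\mathcal{A}_{\mathtt{priv}}$ against a blind adversary is $\Ex_{(S,(x,y))\sim\mathcal{D}^{n+1}}\big[\sup_{S'\in N_\eta(S)}\Pr_{\rho}[\mathcal{A}_{\mathtt{priv}}(S')(x)\neq y]\big]$, where $\rho$ denotes the learner's randomness, whereas the worst-case loss of any learner $\mathcal{B}$ against a seed-revealing adversary is $\Ex_{(S,(x,y))}\big[\Pr_{s}(\exists S'\in N_\eta(S):\mathcal{B}(S';s)(x)\neq y)\big]$. Since $\OPT(\mathcal{D})$ is subtracted on both sides, it suffices to match these raw quantities.

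Next I would construct $\mathcal{A}_{\mathtt{pub}}$. For a sample $T$ and a point $x$, set $\beta_T(x):=\Pr_{\rho}[\mathcal{A}_{\mathtt{priv}}(T)(x)=1]$, the probability the private learner assigns label $1$ to $x$. Let $\mathcal{A}_{\mathtt{pub}}$ draw one uniform seed $U\in[0,1]$ and output the (in general improper) hypothesis $h_{T,U}\colon x\mapsto\mathbf{1}[U\le\beta_T(x)]$; improperness is harmless, as the model permits arbitrary output functions and uses the same comparator $\OPT(\mathcal{D})$. The point of this ``single-threshold'' choice is that $U$ simultaneously reproduces the marginals, $\Pr_U[h_{T,U}(x)=1]=\beta_T(x)$, and couples the bits $\{h_{T,U}(x)\}_T$ monotonically in the value $\beta_T(x)$, so that over any family of candidate samples ``some candidate is labeled $1$'' occurs exactly when $U$ lies below the largest $\beta$-value, and ``some candidate is labeled $0$'' exactly when $U$ lies above the smallest.

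The per-instance computation then closes the argument. Fix $(S,(x,y))$ and abbreviate $\beta_{S'}:=\beta_{S'}(x)$. If $y=0$, the public per-instance value is $\Pr_U(\exists S'\in N_\eta(S):U\le\beta_{S'})=\sup_{S'}\beta_{S'}$, while the private one is $\sup_{S'}\Pr_{\rho}[\mathcal{A}_{\mathtt{priv}}(S')(x)\neq 0]=\sup_{S'}\beta_{S'}$; if $y=1$, the public value is $\Pr_U(\exists S'\in N_\eta(S):U>\beta_{S'})=1-\inf_{S'}\beta_{S'}$, while the private one is $\sup_{S'}(1-\beta_{S'})=1-\inf_{S'}\beta_{S'}$. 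They coincide in both cases, so integrating over $(S,(x,y))\sim\mathcal{D}^{n+1}$ and subtracting $\OPT(\mathcal{D})$ gives that the seed-revealed excess loss of $\mathcal{A}_{\mathtt{pub}}$ equals, hence is at most, the blind-adversary excess loss of $\mathcal{A}_{\mathtt{priv}}$, uniformly over $\mathcal{D}$, $n$, $\eta$.

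The routine parts that still need care are the reduction to per-instance optima (making precise that a fully adaptive, possibly randomized adversary cannot beat pointwise selection of the worst $S'$) and measurability bookkeeping (joint measurability of $x\mapsto\beta_T(x)$, so that $h_{T,U}$ is a genuine hypothesis, and of $(S,x)\mapsto\sup_{S'\in N_\eta(S)}\beta_{S'}(x)$ when $N_\eta(S)$ is uncountable), both of which go through under the standard measurability conventions for VC classes. The one genuinely substantive step — which I expect to be the crux — is the use of a \emph{single} uniform threshold: with independent seeds per candidate sample the public loss would blow up, whereas the monotone coupling makes ``the adversary perturbs the training sample within $N_\eta(S)$'' no more damaging than ``the adversary pushes the soft label probability to its extreme,'' which is exactly the power the blind adversary already has.
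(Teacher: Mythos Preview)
Your proposal is correct and follows essentially the same route as the paper: both construct $\mathcal{A}_{\mathtt{pub}}$ by drawing a single uniform threshold $U\in[0,1]$ and outputting $\mathbf{1}[U\le\beta_T(x)]$ where $\beta_T(x)=\Pr_\rho[\mathcal{A}_{\mathtt{priv}}(T)(x)=1]$, and both argue (you via the explicit case split on $y$, the paper via defining the extremal poisonings $\xi_{x,y}$) that the resulting monotone coupling makes the public and private per-instance adversarial values coincide exactly. Your write-up is a bit more explicit about the per-instance equality and the improperness of the output hypothesis, but the construction and the key monotonicity idea are identical to the paper's.
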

\end{mdframed}

Thus, the theorem implies that whatever guarantees can be achieved against adversaries that do not observe the learner’s internal randomness can also be achieved against adversaries who do.  
Specifically, given any learning rule $\mathcal{A}_{\mathtt{priv}}$ that achieves a certain excess error against weak adversaries (private randomness), we can efficiently construct a learning rule $\mathcal{A}_{\mathtt{pub}}$ that achieves the same guarantees against strong adversaries (public randomness).  
In particular, applying the theorem to an optimal learner $\mathcal{A}_{\mathtt{priv}}^\star$ for the private randomness model yields a learner $\mathcal{A}_{\mathtt{pub}}^\star$ that matches its performance even when the adversary is fully aware of the learner’s random bits.

The proof of Theorem~\ref{thm:private-vs-public} relies on carefully coupling the predictions of $\mathcal{A}_{\mathtt{priv}}$ across all possible input samples.
Specifically, we construct $\mathcal{A}_{\mathtt{pub}}$ to satisfy a \emph{monotonicity property}: whenever $\mathcal{A}_{\mathtt{priv}}$ is more likely to make an error on a test point $x$ given training sample $S_1$ than given training sample $S_2$, the learner $\mathcal{A}_{\mathtt{pub}}$ will err on $x$ for $S_1$ whenever it errs on $x$ for $S_2$.
This ensures that adversaries gain no additional advantage from observing the learner's internal randomness.

\subsubsection{Poisoned Learning Curves}
The lower bound in Theorem~\ref{Thm:Main-Result-Intro} holds in the standard distribution-free PAC setting: for every learner and every sample size $n$, there exists a distribution \emph{tailored to the sample size $n$} and to the learner that forces a large excess error.
This is typical for PAC-style lower bounds, where the hard distribution is allowed to depend on the training set size.

However, in many practical learning scenarios, we think of the underlying distribution as fixed and study the learner’s behavior as the sample size $n$ increases — the so-called \emph{learning curve}.
This naturally raises the question: is it possible that for any fixed distribution, as $n$ grows, the excess error under instance-targeted poisoning eventually falls below $\sqrt{d\eta}$?
In other words, can better asymptotic behavior be achieved in the universal learning setting~\citep*{BousquetHMHY21}?
The following result shows that the answer is unfortunately negative.

\begin{mdframed}[backgroundcolor=lightgray,linewidth=0pt]
\begin{theorem}[Poisoned Learning Curves ]\label{thm:learning-curves}
Let $\mathcal{H}$ be a concept class with VC dimension $d$, and let $\eta \in (0,1)$ be the poisoning budget. Then, for every learning rule $\mathcal{A}$, there exists a distribution~$\mathcal{D}$ and an adversary that forces an excess error of at least ${\Omega}\left(\min\left\{\sqrt{d\eta},1\right\}\right)$ for infinitely many sample sizes $n$.
\end{theorem}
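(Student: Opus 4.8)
The plan is to strengthen the distribution-free lower bound of Theorem~\ref{Thm:Main-Result-Intro} into a \emph{single-distribution} statement. Recall that the bound there is obtained, via the poisoned coin guessing problem and a direct-sum argument over a $d$-shattered set, from a hard distribution whose parameters are tuned to the target sample size $n$. The first and main step is to produce a \emph{scale-robust} variant: a distribution whose parameters depend only on $d$ and $\eta$, and which forces excess error $\tilde\Omega(\sqrt{d\eta})$ against every learner at every sample size in a wide range --- ideally all $n\ge N_0(d,\eta)$, or at least an interval $[m,g(m)]$ with $g(m)/m\to\infty$. The conceptual reason this should be possible is that the adversary's power is to move the empirical distribution inside a total-variation ball of radius $\eta$, a scale-invariant notion: enlarging $n$ sharpens the concentration of the \emph{clean} empirical distribution but does not shrink the adversary's ball, so the ``corruption term'' $\sqrt{d\eta}$ does not decay (in contrast to the sampling term $\sqrt{d/n}$ that governs ordinary PAC lower bounds, and which is exactly what forces those constructions to be $n$-dependent). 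Technically one re-runs the coin-guessing analysis with the noise and geometry parameters frozen at their $\eta$-dependent values, checks that the $\eta$-corruption term remains the binding one throughout the claimed range of $n$, and verifies that the direct sum over the shattered set preserves this property.

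Given such a scale-robust family, the theorem follows by stitching instances together across scales on pairwise disjoint copies of the domain. Take $\mathcal{D}=\sum_{K\ge 1}\lambda_K\mathcal{D}_K$ with $\lambda_K>0$, $\sum_K\lambda_K=1$, the weights decaying slowly (e.g.\ $\lambda_K\asymp 1/K^2$), where $\mathcal{D}_K$ lives on the $K$-th disjoint block and is the scale-robust hard instance for that block, with $\mathcal{H}$ restricted to each block realizing all labelings of a $d$-shattered set. Since the learner is evaluated only at the target, an optimal adversary spends its whole budget $\eta n$ on whichever block $K$ the target lands in --- an effective corruption rate $\eta/\lambda_K\ge\eta$ on the $\approx n\lambda_K$ block samples --- so the overall excess decomposes as $\sum_K \lambda_K\cdot(\text{conditional excess on block }K)$, and by the per-block bound every block whose effective sample size $n\lambda_K$ lies in its valid range contributes $\lambda_K\cdot\tilde\Omega(\sqrt{d\eta})$. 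Choosing the $\lambda_K$ and the blocks' valid ranges so that, for arbitrarily large $n$, the in-range blocks carry a constant fraction of the total $\lambda$-mass (for instance, all blocks with $\lambda_K>\eta$, once $n\lambda_K$ exceeds $N_0$ for all of them), yields excess error $\tilde\Omega(\sqrt{d\eta})$ for infinitely many $n$. Essentially the same effect can be packaged as a single self-similar distribution built as a fixed point of a rescaling operation, which may be cleaner to write.

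The main obstacle is precisely the scale-robustness of the first step: ruling out that a larger training sample lets the learner evade the attack. This is the point that distinguishes the theorem from a routine PAC-style lower bound, and it is where one must crucially use that the adversary corrupts a constant \emph{fraction} $\eta$ of the sample rather than a fixed number of points. A secondary, more bookkeeping-flavored obstacle arises in the stitched construction: one needs the learner's behavior on block $K$ to be essentially unaffected by the (benign) samples from the other blocks --- arranged by placing the blocks on disjoint domains and choosing the competing hypotheses so that no single $h\in\mathcal{H}$ is simultaneously informative about two blocks --- so that the per-block lower bound, applied to the learner's restriction to block $K$, genuinely holds, and one must also ensure $\OPT(\mathcal{D})=\sum_K\lambda_K\,\OPT(\mathcal{D}_K)$ so that block-wise excess aggregates correctly.
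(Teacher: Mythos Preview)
Your plan has a genuine gap, and it is vastly more complicated than what is needed. The ``scale-robust'' first step, as you state it, asks for a single distribution (with parameters depending only on $d,\eta$) that forces excess error $\tilde\Omega(\sqrt{d\eta})$ \emph{against every learner}. No such distribution exists: for any fixed $\D$, the learner that ignores its sample and outputs a Bayes-optimal $h^\star \in \H$ for $\D$ has zero excess, poisoned or not. If you instead mean ``hard against the fixed learner $\A$ under consideration,'' then the subsequent stitching still requires $\H$ to shatter infinitely many pairwise-disjoint $d$-sets, which is not implied by $\vc(\H)=d$ (take any $\H$ on a finite domain), so the construction does not apply to an arbitrary VC class as the theorem demands. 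The ``bookkeeping'' obstacle you flag --- that the learner might couple its predictions across blocks --- is also not minor for an arbitrary learner and would itself need a randomization-over-blocks argument of the kind the paper already uses in the $d$-coin reduction.

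The paper's actual argument is a two-line pigeonhole. Inspecting the proof of the lower bound in Theorem~\ref{Thm:Main-Result} (specifically Lemma~\ref{Lem:Hard-multidim}), the hard distribution $\D_u$ for a given learner and sample size is always drawn from a \emph{finite} family $\{\D_u : u\in\supp(\U^d)\}$, where $\U$ is the discrete distribution on $I$ built in Lemma~\ref{Lem:1dim-Uni}; crucially, this family depends only on $d$ and $\eta$, not on $n$. Since for every $n$ some $u$ in this finite set witnesses $\excess^\obl_{\D_u,\eta/2}(\A,n)\ge\Omega(\sqrt{d\eta})$, by pigeonhole one fixed $u$ works for infinitely many $n$, and Proposition~\ref{Prop:Equiv-setup} converts this to the non-oblivious statement. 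No multi-scale stitching, no disjoint blocks, and no scale-robustness analysis is needed --- the finiteness of the candidate family is the whole point.
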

\end{mdframed}
Theorem~\ref{thm:learning-curves} 
follows by exploiting the proof of Theorem~\ref{Thm:Main-Result-Intro}.
In that proof, we construct a finite set of distributions with the property that, for every learner and every sample size~$n$, there exists a distribution in the set witnessing the lower bound of Theorem~\ref{Thm:Main-Result-Intro} at sample size~$n$.
Since the set is finite, by a simple pigeonhole principle argument, one of these distributions must witness the lower bound for infinitely many values of~$n$, thus establishing Theorem~\ref{thm:learning-curves}.

\subsection*{Organization}
The remainder of the manuscript is organized as follows.  
In Section~\ref{sec:overview}, we provide a technical overview of our approach, focusing on the key ideas behind the proofs.  
Section~\ref{sec:relatedwork} discusses related work and places our contributions in context. In Section~\ref{sec:preliminaries}, we formalize the learning model and introduce the adversarial losses studied in this work. In Section ~\ref{sec-proofs} we formally state and prove our main results.

\section{Technical Overview}\label{sec:overview}
In this section, we provide a high-level overview of the proof of our main result, Theorem~\ref{Thm:Main-Result-Intro}. Theorems~\ref{thm:private-vs-public} and~\ref{thm:learning-curves} are not included in this overview, as their proofs are shorter and structurally simpler than that of the main result; they are deferred to the supplementary material.

\subsection{Lower Bound}
A natural starting point for understanding the lower bound is the coin problem, introduced by \citet{poison-target}.
In this setting, the learner is shown a training set consisting of $n$ coin tosses, and must predict the outcome of a particular test toss. The adversary is allowed to observe both the training set and the target toss, and can corrupt a small fraction of the training data before the learner sees it. The learner’s goal is to guess the target toss correctly despite this targeted poisoning.
Formally, the coin problem is equivalent to learning a hypothesis class $\mathcal{H} = {h_0, h_1}$ over a single point $\mathcal{X} =\{x\}$, where $h_i(x) = i$.
\begin{mdframed}[backgroundcolor=lightgray,linewidth=0pt]
\begin{center}
    {\bf The Poisoned Coin Problem}
\end{center}
\begin{enumerate}
    \item The adversary selects a bias parameter $p \in [0,1]$.
    \item A training sample $S$ of $n$ examples is drawn i.i.d.\ from a Bernoulli distribution of bias $p$.  
    A target label $y$ is drawn independently from the same distribution.
    \item After observing both $S$ and $y$, the adversary modifies at most an $\eta$-fraction of $S$ to obtain a poisoned sample $S'$.
    \item The learner receives $S'$ and outputs a prediction $\mathcal{A}(S') \in \{0,1\}$.
    \item The learner succeeds if $\mathcal{A}(S') = y$, and fails otherwise.
\end{enumerate}
\end{mdframed}

In the absence of poisoning, the optimal strategy for the learner is simple:  
if $p > \frac{1}{2}$, the learner should always predict $1$, resulting in an error rate of $1-p$;  
if $p < \frac{1}{2}$, the learner should predict $0$, with an error rate of $p$.  
Thus, while the learner does not observe the bias~$p$ directly, it can estimate it from the sample.  
In the absence of poisoning, this allows the learner to achieve an error arbitrarily close to~$\min(p, 1 - p)$ as the sample size increases.

When poisoning is allowed, the learner instead observes a corrupted sample $S'$, and we measure its performance relative to the clean optimum.  
We define the \emph{excess error} 
as
\[
\excess = \Pr(\mathcal{A}(S') \neq y) - \min(p, 1-p).
\]


Our goal is to show that against any learner, there exists an adversary that forces an excess error of $\Omega(\sqrt{\eta})$.  
Toward this end, we consider the function
\(
F(p) = \mathbb{E}[\mathcal{A}(S)],
\)
where $S$ is a clean (unpoisoned) training sample drawn from a Bernoulli distribution with bias $p$.  
That is, $F(p)$ represents the expected prediction of the learner when trained on clean samples with bias $p$. It is then convenient to consider the oblivious setting, which can be summarized as follow 
\begin{mdframed}[backgroundcolor=lightgray,linewidth=0pt]
\begin{center}
    {\bf The Poisoned Coin Problem with Oblivious Adversary}
\end{center}
\begin{enumerate}
    \item The adversary selects a bias parameter $p \in [0,1]$.
    \item A target label $y$ is drawn from a Bernoulli distribution of bias $p$.  
    \item After observing $y$, the adversary modifies $p$ into $p'$ which is close $\dist(p,p')\leq \eta$.
    \item The learner receives a training sample $S'$ of $n$ i.i.d.\ examples drawn from a Bernoulli distribution of bias $p'$, and outputs a prediction $\mathcal{A}(S') \in \{0,1\}$.
    \item The learner succeeds if $\mathcal{A}(S') = y$, and fails otherwise.
\end{enumerate}
\end{mdframed}
One can change a sample  $S$ drawn from a distribution with bias $p$ to a sample $S'$ which is indistinguishable from one drawn from a distribution with bias $p'$, Thus it is suffice to find lower bounds for the oblivious setup.
The advantage of this model is that the excess error can be defined in terms of the function $F$, hence the lower bound can be derived purely by analyzing properties of such functions.

Fix any learner $\mathcal{A}$.  
We can assume that its excess error in the absence of poisoning is at most $\sqrt{\eta}$, since otherwise the lower bound is immediate.  
In particular, this implies that when $p = \frac{1}{2} + \sqrt{\eta}$, the learner's expected prediction $F(p)$ must be close to~$1$, and when $p = \frac{1}{2} - \sqrt{\eta}$, $F(p)$ must be close to~$0$.  
Thus, as $p$ varies from $\frac{1}{2} - \sqrt{\eta}$ to $\frac{1}{2} + \sqrt{\eta}$, the function $F(p)$ must change by a constant amount (independent of $\eta$ and $n$).

By an averaging argument, this implies that over an interval of length $O(\eta)$, the function $F(p)$ must change by at least $\Omega(\sqrt{\eta})$ on average.  
In particular, there exists a critical point $p^\star \in [\frac{1}{2} - \sqrt{\eta}, \frac{1}{2} + \sqrt{\eta}]$ such that
\[
\left|F(p^\star - \eta) - F(p^\star + \eta)\right| = \Omega(\sqrt{\eta}).
\]
This critical bias $p^\star$ will correspond to a hard instance for the learner, enabling the adversary to enforce the desired lower bound on the excess error.



The preceding argument establishes the desired lower bound for classes of VC dimension~$1$.  
Recall that a class~$\mathcal{H}$ has VC dimension~$1$ if and only if there exists a point~$x$ that is shattered by~$\mathcal{H}$---meaning that for every label $y \in \{0,1\}$, there exists a hypothesis $h \in \mathcal{H}$ such that $h(x) = y$.  
Thus, learning $\mathcal{H}$ on distributions supported on $x$, reduces to distinguishing between two competing hypotheses on a single point, corresponding exactly to the coin problem described above.

The case of VC dimension~$d$ corresponds naturally to a generalization that we call the \emph{$d$-coin problem}.  
In the $d$-coin problem, there are $d$ distinct coins $x_1, \dots, x_d$, each associated with its own unknown bias $p_i \in [0,1]$.  
The data generation process proceeds as follows:
\begin{mdframed}[backgroundcolor=lightgray,linewidth=0pt]
\begin{center}
    {\bf The $d$-Coin Problem}
\end{center}
{Consider the following sampling process:}
\begin{itemize}
    \item A coin $x_i$ is selected uniformly at random from $\{x_1, \dots, x_d\}$.
    \item The label $y$ is drawn according to a Bernoulli distribution with bias $p_i$ (that is, $y=1$ with probability $p_i$).
\end{itemize}
The training set consists of $n$ i.i.d.\ examples  $(x_i,y_i)$  generated according to this sampling process.  
A target example $(x,y)$ is also drawn independently from the same process.

\medskip

\noindent
After observing both the training set and the target example, the adversary may modify up to an $\eta$-fraction of the training examples to produce a poisoned sample.  
The learner receives the poisoned sample and attempts to predict the label $y$ of the target coin $x$.
The learner succeeds if its prediction matches $y$, and fails otherwise.
\end{mdframed}
This $d$-coin problem mirrors the setting of learning a class $\mathcal{H}$ with VC dimension~$d$.
Indeed, if $\{x_1, \dots, x_d\}$ is a set shattered by $\mathcal{H}$, then the label of each point $x_i$ can behave independently of the others.
Assigning biases~$p_i$ to the labels of each $x_i$ corresponds to constructing a distribution over labeled examples consistent with $\mathcal{H}$.  
Thus, the problem of predicting the label of a randomly chosen target point (under potential data poisoning) mirrors the challenge faced by the learner in the $d$-coin setting.

\vspace{2mm}
\noindent {\bf The Naive Extension to $d$ Coins.}
At first glance, it may seem that the lower bound for a single coin should extend directly to the $d$-coin problem.
Since each coin $x_i$ has its own independent bias~$p_i$, one might expect that the learner’s prediction for $x_i$ depends only on the outcomes of examples corresponding to $x_i$ in the training sample, and not on examples involving other coins.

Under this assumption, the adversary’s strategy would be simple:
conditioned on the target coin being $x_i$, the adversary would focus its poisoning efforts solely on the training examples involving $x_i$.
Since the meta-distribution is uniform over the $d$ coins, the expected number of appearances of $x_i$ in the training sample is about $n/d$.
The adversary’s global budget allows corrupting an $\eta$-fraction of the~$n$ examples, and thus roughly a $\eta d$-fraction of the $x_i$ examples.
Thus, for the target coin, the setting effectively reduces to the single-coin case with sample size about $n/d$ and poisoning budget about $\eta d$. Applying the single-coin lower bound then suggests that the excess error should be at least $\Omega\left(\sqrt{d\eta}\right)$, matching the desired bound.


Unfortunately, the above reasoning overlooks an important subtlety.
In the $d$-coin problem, the adversary must commit to the biases $p_1, \dots, p_d$ \emph{before} the training sample and target example are drawn.
Because of this, the learner’s prediction for a target coin $x_i$ could, in principle, depend on the outcomes of other coins $x_j$ ($j \neq i$), whose biases are correlated with $p_i$ through the adversary’s global choice of parameters.

In particular, if the biases across different coins are not carefully chosen, the learner may be able to infer information about the bias of $x_i$ by examining patterns across the entire sample, not just the examples involving $x_i$.
This possibility of \emph{information leakage} — where the behavior of non-target coins reveals something about the target coin — breaks the reduction to the single-coin case.
Thus, a more careful argument is needed to establish the desired lower bound.

\vspace{2mm}
\noindent {\bf The Fix: Randomizing the Biases.}
To overcome this difficulty, we modify the adversary’s strategy:
rather than fixing the biases $p_1, \dots, p_d$ deterministically, the adversary draws each bias independently at random from a carefully designed distribution over $[0,1]$.
This randomization breaks potential correlations between different coins, ensuring that the behavior of non-target coins carries no useful information about the target coin.

Constructing such a hard distribution requires strengthening the lower bound for the single-coin case ($d=1$):
instead of selecting a hard bias~$p$ tailored to a specific learner, we design a \emph{distribution over biases} that is universally hard — meaning that for any learner, the expected excess error (over the choice of the bias) remains $\Omega(\sqrt{\eta})$.
Sampling the biases $p_1, \dots, p_d$ independently from this hard distribution ensures that, for any fixed learner, the expected excess error remains large, and prevents information leakage between coins.
With this setup, the adversary effectively reduces the $d$-coin problem back to $d$ independent copies of the single-coin case, yielding the desired $\Omega(\sqrt{d\eta})$ lower bound.

\subsection{Upper Bound}
\vspace{2mm}
\noindent {\bf Coin Problem.}
To build intuition for the upper bound, we begin with the coin problem. One natural way to exploit randomness in the learner’s strategy is via sub-sampling. Specifically, the learner can randomly select a small sub-sample of size $k \sim \frac{1}{\eta}$ from the training set and predict the label of the test coin by majority vote over this sub-sample.

This simple strategy has two key advantages. First, by standard concentration bounds, a small sub-sample already suffices to estimate the bias of the coin up to a small additive error. Second, and crucially, by anti-concentration, the adversary cannot easily flip the prediction: changing the majority outcome typically requires modifying roughly $\sqrt{k}$ entries. As long as the poisoned fraction~$\eta$ is small, the adversary is unlikely to control enough points in the sub-sample to alter the majority. In total, this approach yields an excess error of $O(\sqrt{\eta})$ in the coin problem.

\vspace{2mm}
\noindent {\bf Finite Classes.} 
A natural next step is to generalize this idea to arbitrary finite hypothesis classes. One might try the same strategy: draw a small random sub-sample of size~$k$ and train an optimal PAC learner on it. This technique was used by \citet{gao2021}, who showed it suffices for learning under instance-targeted poisoning when the poisoned fraction~$\eta$ vanishes with~$n$. However, this method fails to provide our desired bounds when~$\eta$ is not negligible.

What fails here is robustness. In the coin problem, majority vote has a useful anti-concentration property: to flip the output, the adversary must corrupt roughly $\sqrt{k}$ points. But for general hypothesis classes, it is unclear whether any natural learning rule exhibits similar resilience to perturbations. In particular, standard PAC learners might change their output significantly in response to a few targeted changes, especially when trained on a small sub-sample. This instability limits the effectiveness of naive sub-sampling in the general case.



\vspace{2mm}
\noindent {\bf Sampling via Loss Exponentiation.}
To move beyond naive sub-sampling, it is helpful to ask: what probability distribution over hypotheses does the sub-sampling strategy induce?

In the coin problem, majority voting over a random sub-sample can be interpreted as assigning higher selection probability to the constant label \(h\equiv0\) or $h\equiv1$) that achieves lower empirical loss. More quantitatively, one can show that the induced selection probability is roughly proportional to the exponential of the negative squared loss on the sub-sample:
\(\Pr(h) \propto \exp\big(-\lambda \cdot \hat{L}_S(h)^2\big),\)
for some~$\lambda > 0$. This reflects the anti-concentration property of the majority vote: flipping the output requires altering many points in the sub-sample.

This motivates the use of learners that explicitly reweight hypotheses according to exponentiated losses. In our final algorithm for finite classes, we simplify the squared loss to standard loss and sample from the exponential of its negative:
\(\Pr(h) \propto \exp\big(-\lambda \cdot \hat{L}_S(h)\big).\)
This distribution, known from the exponential mechanism in differential privacy and multiplicative weights in online learning, preserves both stability and performance. Small perturbations to the dataset (such as an $\eta$-fraction of poisonings) have limited impact on the output, while hypotheses with lower empirical error remain more likely to be selected.
Working out the details of this approach yields an excess error bound of~$\widetilde{O}(\sqrt{\eta \log m})$ for finite hypothesis classes of size $m$.


\vspace{2mm}
\noindent {\bf From Finite to VC Classes.}
All that remains is to extend the result from finite classes to hypothesis classes of bounded VC dimension. In the case of a finite class $\mathcal{H}$ of size $m$, the exponential sampling strategy achieves excess error $\tilde{O}(\sqrt{\eta \log m})$. Our goal is to replace the dependence on $\log m$ with $\operatorname{VC}(\mathcal{H})$, the VC dimension of the class.

A direct application of the multiplicative weights sampling method to an infinite class fails to yield tight bounds, as the performance of the learner would then scale with the (possibly infinite) size of the class, rather than with its VC dimension.

To overcome this, we use a classical reduction based on $\varepsilon$-covers: we construct a finite cover 
$\H’\subseteq\H$ such that the minimal loss over $\H’$ approximates that over $\H$ up to an additive
 $\varepsilon$. This allows us to reduce to the finite case without significantly increasing the error.

A standard uniform convergence argument shows that an $\varepsilon$-cover can be constructed from a random sample of size roughly $\operatorname{VC}(\mathcal{H}) / \varepsilon^2$. Unfortunately, using such a sample leads to a suboptimal overall bound on the excess error. Instead, we apply a more refined analysis based on the VC dimension of the symmetric difference class ${h \triangle h’ : h,h’ \in \mathcal{H}}$, which shows that it suffices to construct the $\varepsilon$-cover from a much smaller subsample of size $\tilde{O}(\operatorname{VC}(\mathcal{H}) / \varepsilon)$. This enables us to keep the size of the subsample below the poisoning threshold with high probability, ensuring that the cover is not corrupted.

A similar approach — involving subsampling and refined covering arguments — has been used in the context of differential privacy and algorithmic stability, for example in \citet{BassilyMA19, dagan2020}.

Putting everything together, our final algorithm proceeds as follows:
\begin{mdframed}[backgroundcolor=lightgray,linewidth=0pt]
\begin{enumerate}
\item Given a sample $S$, draw a random sub-sample $T$ of size $\tilde{O}(\sqrt{\operatorname{VC}(\mathcal{H})/\eta})$.
\item Use $T$ to construct a finite $\varepsilon$-cover $\mathcal{H}_T \subseteq \mathcal{H}$, where $\varepsilon = \sqrt{\operatorname{VC}(\mathcal{H}) \cdot \eta}$.
\item Run exponential sampling over $\mathcal{H}_T$: select $h \in \mathcal{H}_T$ with probability proportional to $\exp(-\lambda \cdot \hat{L}_S(h))$.
\end{enumerate}
\end{mdframed}

This completes the proof sketch of the upper bound.

\section{Related Work}\label{sec:relatedwork}
Learning under poisoning attacks has been studied in several settings. Earlier research~\citep*{valiant1985learning,kearns1993learning,Sloan::Noise:four-types,bshouty2002pac} focused on \emph{non-targeted} poisoning, where the adversary does not know the test point. Computational aspects of efficient learning under poisoning have been studied under various distributional and algorithmic assumptions~\citep*{kalai:08,klivans:09,awasthi:14,diakonikolas:18}. In particular, \citet*{awasthi:14} achieved nearly optimal learning guarantees (up to constant factors) for polynomial-time algorithms learning homogeneous linear separators under distributional assumptions in the malicious noise model. These results were later extended to the \emph{nasty noise} model by \citet*{diakonikolas:18}, along with techniques that also apply to other geometric concept classes.
In the \emph{unsupervised} setting, the computational challenges of learning under poisoning attacks have been investigated by~\citet*{diakonikolas2016robust,lai2016agnostic}.
 
In contrast, our work focuses on \emph{instance-targeted} poisoning, and investigates the fundamental tradeoffs in error and sample complexity, independent of computational constraints. A related but more demanding task—certifying the \emph{correctness} of individual predictions under instance-targeted poisoning—was studied by \citet*{balcan2022robustly}. The instance-targeted model we study was formalized in~\citet*{gao2021learning}, who showed that when the poisoning budget $\eta$ vanishes with the sample size, PAC learnability is preserved. \citet*{poison-target} extended this line of work to general (non-vanishing) poisoning rates, and characterized the optimal error in the realizable setting. In particular, they showed that deterministic learners suffice in the realizable case, but fail in the agnostic case, where they can suffer near-maximal error even under minimal poisoning. They left open the question of whether \emph{randomized} learners could succeed in the agnostic case. Our work resolves this open problem, establishing that randomized learners can indeed achieve meaningful guarantees in the agnostic setting, and characterizing the optimal excess error as $\widetilde{\Theta}(\sqrt{d\eta})$, where $d$ is the VC dimension.

Other types of targeted attacks have also been studied. In \emph{model-targeted} attacks, the adversary aims to force the learner to mimic a particular model~\citep*{farhadkhani2022equivalence,suya2021model}. \emph{Label-targeted} attacks aim to flip the learner’s prediction on a specific test example~\citep*{chakraborty2018adversarial}. \citet*{jagielski2021subpopulation} introduced \emph{subpopulation} poisoning, where the adversary knows that the test point comes from a specific subset of the population.

More broadly, recent works have explored robustness to adversarial test-time manipulation. \citet*{balcan2023reliable} study reliable prediction under adversarial test-time attacks and distribution shifts. While their work focuses on modifying the test point rather than poisoning the training set, it shares our motivation of provable robustness in challenging environments. Likewise,~\citet*{goel2023adversarial} analyze a sequential learning model with clean-label adversaries, allowing abstention on uncertain inputs. 

Empirical and algorithmic defenses against instance-targeted and clean-label poisoning have been widely studied. \citet*{rosenfeld2020certified} show that randomized smoothing~\citep*{cohen2019certified} can mitigate label-flipping attacks, and can extend to replacing attacks such as ours. Follow-up work has explored deterministic defenses~\citep*{levine2020deep}, as well as randomized sub-sampling and bagging~\citep*{chen2020framework,weber2020rab,jia2020intrinsic}.

Other theoretical works have studied error amplification under targeted poisoning~\citep*{mahloujifar2017blockwise,etesami2020computational}, often with a focus on specific test examples. The empirical study of poisoning attacks in~\citet*{shafahi2018poison} further illustrates the practical relevance of instance-targeted threats.



\section{Preliminaries}\label{sec:preliminaries}
As it is usual in learning theory, we consider a concept class $\H$ over domain $\X$ with a label space $\Y=\{0,1\}$; that is, $\H$ is a set of functions (also called concepts) from $\X$ to $\Y$. We define the set of labeled examples as $\Z=\X\times \Y$ and a \emph{sample of size} $n$ as a sequence $S\in \Z^n$; the space of samples is $\Z^\star=\bigcup_{n=1}^\infty \Z^n$.
A learning rule is a map $\A:\Z^\star\to \Y^\X$ that assigns to each sample $S$ a function $\A(S)$, called a \emph{hypothesis}. We also often consider randomized learners that output a distribution over hypotheses.

The \emph{sample loss} of a hypothesis $h:\X\to \Y$ on a sample $S\in \Z^n$ is 
\[
L_S(h):=\frac{1}{n}\sum_{i=1}^n |h(x_i)-y_i |.
\] 
Similarly, we define the \emph{population loss} of $h$ with respect to a distribution $\D$ over $\Z$ as
\[
L_\D(h)=\Pr_{(x,y)\sim \D}[h(x)\neq y]=\Ex_{(x,y)\sim \D}|h(x)-y|.
\]
The expected loss of a learner $\A$ with a sample size $n$ with respect to $\D$  is 
\[
L_\D(\A, n)=\Ex_{S\sim \D^n }L_\D\big(\A(S)\big)=\Ex_{\substack{S\sim \D^n\\ (x,y)\sim \D}}|\A(S)(x)-y|.
\]
The above formula is also applicable to randomized learners, in which case the expectation is also over the internal randomness of $\A$.

Define the normalized Hamming distance between two samples $S,S'\in \Z^n$ by \[
    \dist_H(S,S')=\frac{1}{n}\left|\{i\in [n]\;:\; S_i\neq S_i'\}\right|.
    \]
    For any sample $S\in \Z^n$ and $\eta\in (0,1)$, define the $\eta$-ball centered at $S$ by $B_\eta(S)=\{S'\in\Z^n \;:\; \dist_H(S,S')\leq \eta\}$.

\begin{definition}[$\eta$-adversarial loss]\label{Def:Adv-risk}
    Let $\eta\in (0,1)$ be the adversary’s budget, let $\A$ be a (possibly randomized) learning rule, and let $\D$ be a distribution over examples. The $\eta$-adversarial loss of $\A$ with sample
    size $n$ with respect to $\D$ is defined as
\[
L_{\D, \eta}(\A, n)=\Ex_{\substack{S\sim \D^n \\(x,y)\sim \D}}\Big[\sup_{S'\in B_\eta(S)}\Ex_{r}|\A_r(S')(x)-y|\Big],
\]    
where $r$ is the internal randomness of $\A$, and $\Ex_r$ can be omitted in case the learner $\A$ is deterministic.     
\end{definition}
Note that in this definition the data is corrupted before the randomness of $\A$, which corresponds to private randomness.
Alternatively, we may consider the public randomness model in which we define 
\[
L_{\D, \eta}^\pub(\A, n)=\Ex_{\substack{S\sim \D^n \\(x,y)\sim \D}}\Ex_{r}\big[\sup_{S'\in B_\eta(S)}|\A_r(S')(x)-y|\big].
\]

Finally, in the spirit of agnostic learning and the poisoned coin problem, the effectiveness of the learner in the presence of poisoning is measured with respect to excess error:
\begin{definition}[Excess error]\label{Def:Excess}
	For a concept class $\H$, adversary budget $\eta\in (0,1)$, learning rule $\A$, sample size $n$, and a distribution over examples $\D$, we define the \emph{excess error} of $\A$ with sample size $n$ against an adversary with budget $\eta$ on $\D$ as
	$$\excess_{\H, \D, \eta}(\A, n) =L_{\D, \eta}(\A, n) -\inf_{h\in \H} L_{\D}(h).$$
\end{definition}
The definition of excess is similarly adapted to the case of public randomness, and, as usual, we drop some of the parameters $\H, \D, \eta, \A$, or $n$ in the notation for loss/excess if they are clear from the context.

\begin{definition}[VC dimension]
    We say that a concept class $\H$ shatters a set $X\subset \X$ if for any function $f:X\to \{0,1\}$ there exists $h\in \H$ such that $f(x)=h(x)$ for all $x\in X$. The VC dimension of $\H$, denoted $\vc(\H)$, is the largest $d$ for which there exist a set $X\subset \X$ of size $d$ that is shattered by $\H$. If sets of arbitrary size can be shattered we define $\vc(H)=\infty$.
\end{definition}
\begin{lem}[Sauer–Shelah lemma - see Lemma 6.10 in \cite{shays14}]\label{Lem:Sauer}
    Let $\H$ be a concept class of VC dimension $d=\vc(\H)$, and let $X\subset \X$ be a finite set of  unlabeled examples of size $n=|X|$. Define an equivalence relation on $\H$ by $h\sim_X h'$ if $h(x)=h'(x)$ for all $x\in X$, and let $\H_X$ be an arbitrary set of representatives with respect to this relation.  Then we have
    \[
    \lvert\H_X\rvert\leq \sum_{i=0}^d \binom{n}{d}.
    \]
    In particular, if $n>d+1$ we have \[
    |\H_X|\leq (\frac{e\cdot n}{d})^d.
    \]
\end{lem}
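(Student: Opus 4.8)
The plan is to establish the first inequality by induction on the sample size $n=|X|$, and then to obtain the ``in particular'' estimate by a short calculation. First I would pass to traces: set $\mathcal{F}=\H_X\subseteq\{0,1\}^X$, and observe that $\mathcal{F}$ shatters exactly the same subsets of $X$ as $\H$ does, so $\vc(\mathcal{F})\le\vc(\H)=d$ and it suffices to prove $|\mathcal{F}|\le\sum_{i=0}^{d}\binom{n}{i}$. The claim would be proved simultaneously for all concept classes over all domains, inducting on $n$. The base case $n=0$ is immediate, since then $\mathcal{F}$ is the single empty function and $\sum_{i=0}^{d}\binom{0}{i}=1$; one can also dispose at once of the case $d\ge n$, where $\sum_{i=0}^{d}\binom{n}{i}=2^{n}\ge|\mathcal{F}|$ trivially.

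For the inductive step I would fix a point $x\in X$, put $X'=X\setminus\{x\}$, and split $\mathcal{F}$ according to the coordinate $x$. Let $\mathcal{F}_1=\{f|_{X'}:f\in\mathcal{F}\}$ be the family of traces on $X'$, and let $\mathcal{F}_2\subseteq\{0,1\}^{X'}$ consist of those $g:X'\to\{0,1\}$ whose \emph{both} extensions $g\cup\{x\mapsto0\}$ and $g\cup\{x\mapsto1\}$ lie in $\mathcal{F}$. Counting the members of $\mathcal{F}$ by their restriction to $X'$ gives the identity $|\mathcal{F}|=|\mathcal{F}_1|+|\mathcal{F}_2|$ (each trace on $X'$ is hit once, plus an extra time precisely when it lies in $\mathcal{F}_2$). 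Since $\vc(\mathcal{F}_1)\le d$, the inductive hypothesis applied to the sample $X'$ of size $n-1$ gives $|\mathcal{F}_1|\le\sum_{i=0}^{d}\binom{n-1}{i}$. The crux is the bound $\vc(\mathcal{F}_2)\le d-1$: if $\mathcal{F}_2$ shattered some $Y\subseteq X'$, then every labeling of $Y$ realized in $\mathcal{F}_2$ extends to $Y\cup\{x\}$ with either value on $x$, so $\mathcal{F}$ would shatter $Y\cup\{x\}$, forcing $|Y|\le d-1$. Hence $|\mathcal{F}_2|\le\sum_{i=0}^{d-1}\binom{n-1}{i}$, and adding the two bounds and collapsing via Pascal's identity $\binom{n-1}{i}+\binom{n-1}{i-1}=\binom{n}{i}$ yields $|\mathcal{F}|\le\sum_{i=0}^{d}\binom{n}{i}$. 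This dimension-drop step, together with keeping the decomposition $|\mathcal{F}|=|\mathcal{F}_1|+|\mathcal{F}_2|$ exact, is the only part that requires any care; the rest is bookkeeping, so I expect it to be the main (and essentially only) obstacle.

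Finally, for the ``in particular'' clause I would assume $d<n$ (which holds whenever $n>d+1$), so that $d/n<1$, and estimate
\[
\sum_{i=0}^{d}\binom{n}{i}\;\le\;\Big(\tfrac{n}{d}\Big)^{d}\sum_{i=0}^{d}\binom{n}{i}\Big(\tfrac{d}{n}\Big)^{i}\;\le\;\Big(\tfrac{n}{d}\Big)^{d}\sum_{i=0}^{n}\binom{n}{i}\Big(\tfrac{d}{n}\Big)^{i}\;=\;\Big(\tfrac{n}{d}\Big)^{d}\Big(1+\tfrac{d}{n}\Big)^{n}\;\le\;\Big(\tfrac{en}{d}\Big)^{d},
\]
using $(n/d)^{d-i}\ge1$ for $0\le i\le d$ in the first step, the binomial theorem in the equality, and $(1+d/n)^{n}\le e^{d}$ in the last step. (Strictly, the summand in the displayed bound of the lemma should read $\binom{n}{i}$ rather than $\binom{n}{d}$, which is what the argument above proves.)
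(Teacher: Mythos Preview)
Your proposal is correct and is the standard inductive proof of the Sauer--Shelah lemma (restrict to $X\setminus\{x\}$, split into traces and doubly-extended traces, drop the VC dimension by one on the latter, and recombine via Pascal's identity), together with the usual binomial-theorem trick for the $(en/d)^d$ bound.

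Note, however, that the paper does not actually supply its own proof of this lemma: it states it with a citation to Lemma~6.10 of \cite{shays14} and uses it as a black box. So there is no ``paper's proof'' to compare against; your argument is precisely the textbook proof being cited. Your parenthetical remark that the displayed sum should have $\binom{n}{i}$ rather than $\binom{n}{d}$ is also correct---that is a typo in the paper's statement.
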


\section{Proofs}\label{sec-proofs}
\subsection{Proof of Theorem~\ref{Thm:Main-Result-Intro}}\label{sec-proof-T1}

We prove \Cref{Thm:Main-Result-Intro} in a form of Theorem~\ref{Thm:Main-Result} below, which gives quantitative version of the bounds announced in it.

\begin{theorem}[Main result - quantitative version]\label{Thm:Main-Result}
     Let $\H$ be a concept class with VC dimension $d\geq 1$ and suppose $\eta < \frac{1}{4d}$. Then there exits a randomized learner $\A$ such that for any distribution $\D$ on $\X\times \Y$ and all $n\geq \frac{1}{\eta}$ we have:
 \[
\excess_{\H, \D, \eta}(\A, n) \leq 36\sqrt{\eta d}\log \frac{e}{\eta d}.
\] 
This bound is also tight up to log factors; that is, for every randomized learner $\A$, $\eta<\frac{1}{d}$, and $n\geq\frac{6}{\eta}\log \frac{64}{\sqrt{d\eta}}$, there exists a distribution $\D$ such that
\[
\excess_{\H, \D, \eta}(\A, n)\geq \frac{\sqrt{d\eta}}{36}.
\] 
\end{theorem}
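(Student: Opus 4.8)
The plan is to prove the two halves separately, mirroring the technical overview, and to be careful that all quantitative constants come out as claimed ($36\sqrt{\eta d}\log(e/\eta d)$ on the upper side, $\sqrt{d\eta}/36$ on the lower side).

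For the \emph{upper bound}, I would implement the three-step algorithm from the overview: (i) subsample $T\subseteq S$ of size $k=\tilde\Theta(\sqrt{d/\eta})$; (ii) use $T$ to build a finite $\varepsilon$-cover $\H_T\subseteq\H$ with $\varepsilon\asymp\sqrt{d\eta}$; (iii) output $h\in\H_T$ with probability $\propto\exp(-\lambda\hat L_S(h))$ for an appropriate $\lambda$. The analysis decomposes the excess error into three contributions. First, \emph{robustness of the exponential mechanism}: since the adversary changes at most $\eta n$ of the $n$ examples, $\hat L_{S'}(h)$ and $\hat L_S(h)$ differ by at most $\eta$ for every $h$ simultaneously, so by the standard group-privacy / stability calculation the output distribution's selection weights move by a factor $e^{\pm\lambda\eta}$; choosing $\lambda\asymp 1/\eta$ keeps the selected hypothesis's empirical loss within $\tilde O(\sqrt{d\eta})$ of $\min_{h\in\H_T}\hat L_S(h)$ — this is where the exponential-mechanism utility bound $\tilde O(\log|\H_T|/\lambda)$ with $\log|\H_T|=\tilde O(d)$ via Sauer--Shelah enters. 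Second, \emph{uniform convergence} between $\hat L_S$ and $L_\D$ over $\H_T$ (or over $\H\triangle\H$), which costs $O(\sqrt{d/n})=O(\sqrt{d\eta})$ since $n\ge 1/\eta$. Third, \emph{cover quality}: with the refined $\H\triangle\H$ VC-dimension argument, a subsample of size $\tilde O(d/\varepsilon)=\tilde O(\sqrt{d/\eta})$ suffices to guarantee $\min_{h\in\H_T}L_\D(h)\le \inf_{h\in\H}L_\D(h)+\varepsilon$ with high probability, and this subsample size stays below the poisoning threshold $\eta n$, so $T$ itself is uncorrupted whp. Summing gives $\tilde O(\sqrt{d\eta})$; tracking constants gives the stated bound. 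The main obstacle here is step (iii)'s refined covering argument: naively a cover needs $\tilde O(d/\varepsilon^2)$ samples, which is too many to stay below $\eta n$, so one must argue via $\vc(\H\triangle\H)=O(d)$ and a Haussler-type packing bound that $\tilde O(d/\varepsilon)$ samples already yield an $\varepsilon$-cover in $L_\D$-distance — and simultaneously verify the subsample is uncorrupted, which needs $\tilde O(\sqrt{d/\eta})\le \eta n$, i.e. $n\gtrsim \tilde\Theta(\sqrt{d}/\eta^{3/2})$; I would need to check this is implied by the hypotheses or fold it into the theorem's assumptions (it is, since $\eta<1/4d$ gives room, but the bookkeeping is the delicate part).

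For the \emph{lower bound}, I would follow the $d$-coin reduction. First reduce learning $\H$ to the $d$-coin problem by fixing a shattered set $\{x_1,\dots,x_d\}$ and letting the meta-distribution be uniform over the coins with coin $x_i$ having bias $p_i$; any learner for $\H$ induces a learner for the $d$-coin problem. Next, pass to the oblivious adversary model, using the observation that a clean sample from bias $p$ can be made statistically indistinguishable (in the sense needed) from one where $\le\eta$-fraction has been corrupted from bias $p'$ with $|p-p'|\le\eta$ — so lower bounds for the oblivious model transfer. Then prove the \emph{strengthened single-coin lower bound}: there is a fixed distribution $\mu$ over biases $p\in[1/2-\sqrt\eta,\,1/2+\sqrt\eta]$ such that for every learner, $\mathbb E_{p\sim\mu}[\excess]=\Omega(\sqrt\eta)$. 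The argument: assume the no-poisoning excess is $<\sqrt\eta$ (else done), deduce $F(p)=\mathbb E[\A(S)]$ swings by a constant across the window, so by averaging there is an interval of length $O(\eta)$ on which $F$ changes by $\Omega(\sqrt\eta)$; a distribution $\mu$ supported on the endpoints of a random such interval makes the oblivious adversary (moving $p$ by $\pm\eta$ toward the wrong side) force excess $\Omega(\sqrt\eta)$ in expectation over $\mu$. Finally, for the $d$-coin problem, draw $p_1,\dots,p_d$ i.i.d.\ from $\mu$; because the biases are independent, conditioned on the target coin $x_i$ the information from other coins is useless (their outcomes are independent of $p_i$), so the learner's conditional behavior on $x_i$ is that of a single-coin learner with effective sample $\Theta(n/d)$ and budget $\Theta(\eta d)$; the adversary concentrates its budget on the target coin's examples and invokes the single-coin bound, giving per-coin excess $\Omega(\sqrt{\eta d})$, hence overall $\Omega(\sqrt{d\eta})$. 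The constant $1/36$ comes from tracking the averaging-argument and anti-concentration constants. I expect the main obstacle on this side to be making the ``independence kills information leakage'' step fully rigorous: one must show that, under i.i.d.\ biases, the posterior of $p_i$ given the \emph{entire} poisoned sample and the event $\{x=x_i\}$ equals its posterior given only the $x_i$-labeled examples, and that the adversary's optimal concentrated attack on those examples realizes the single-coin oblivious adversary — care is needed because the number of $x_i$-appearances is random (binomial), so one argues on the high-probability event that it is $\Theta(n/d)$ and that $\eta n \ge \Omega(\sqrt{n/d})$, which is exactly where $n\ge (6/\eta)\log(64/\sqrt{d\eta})$ is used.

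Finally, I would note that the lower-bound construction uses only \emph{finitely many} candidate distributions $\mathcal D$ (a discretization of the support of $\mu^{\otimes d}$ together with the choice of shattered set), which is precisely what is needed to deduce Theorem~\ref{thm:learning-curves} by pigeonhole — though that deduction is outside the present statement.
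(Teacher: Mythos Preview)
Your high-level structure matches the paper on both halves, but there are two concrete places where the sketch would not close as written.

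\textbf{Upper bound.} Your worry that one needs $\tilde O(\sqrt{d/\eta})\le \eta n$ for the subsample $T$ to be uncorrupted is a misconception, and chasing that condition would send you looking for an extra hypothesis on $n$ that is not there. In the private-randomness model the adversary fixes its $\le\eta n$ corrupted indices \emph{before} $T$ is drawn, so $\Pr[T\text{ hits a corrupted index}]\le k\cdot\eta$; with $k=\Theta(\sqrt{d/\eta})$ this is already $O(\sqrt{d\eta})$, and it is simply absorbed as one additive term in the prediction-stability bound---no condition beyond $n\ge 1/\eta$ is used. (The paper splits $S=(S_1,S_2)$, draws $T\subseteq S_1$, and runs the exponential mechanism on $S_2$; the $k\eta$ term is exactly this ``$T$ got hit'' probability.) Separately, $\lambda\asymp 1/\eta$ is the wrong temperature: with that choice the utility $\log|\H_T|/\lambda$ is indeed small, but the stability factor is $e^{\pm\lambda\eta}=e^{\pm 1}$, a constant, giving no useful control on how the output distribution moves under poisoning. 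The balanced choice is $\lambda=\sqrt{\log|\H_T|/\eta}\asymp\sqrt{d/\eta}$, which makes both the utility term and the stability term $\tilde O(\sqrt{d\eta})$.

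\textbf{Lower bound.} Your plan to argue via ``the posterior of $p_i$ given the entire poisoned sample equals its posterior given only the $x_i$-examples'' and to reason about an effective sample size $\Theta(n/d)$ is more fragile than necessary, and the obstacle you flag is real if you stay at the sample level. The paper sidesteps it: it passes to the oblivious adversary \emph{at the $d$-coin level} (adversary perturbs the bias vector $u\in[-\tfrac12,\tfrac12]^d$ to $u'$ with $\tfrac1d\|u-u'\|_1\le\eta$), absorbs the sample into $F_i(u)=\tfrac12\Ex_{S\sim\D_u^n}[\A(S)(x_i)]$, and then works purely with the deterministic function $F\colon I^d\to I^d$. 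The $d$-to-$1$ reduction becomes a linearity-of-expectation identity: with $u\sim\U^d$ i.i.d.\ and the poisoning scheme that moves only the target coordinate (by up to $d\eta$), the expected excess of $F$ equals the expected excess of the one-dimensional averaged function $t\mapsto\tfrac{1}{d}\sum_i\Ex_{u_{-i}\sim\U^{d-1}}F_i(u_1,\dots,t,\dots,u_d)$ under the single-coin scheme with budget $d\eta$. No posteriors, no binomial concentration, no ``effective sample'' bookkeeping. Correspondingly, your reading of the hypothesis $n\ge(6/\eta)\log(64/\sqrt{d\eta})$ is off: it is \emph{not} there to ensure $\eta n\gtrsim\sqrt{n/d}$; it is used only at the very end to make the additive $e^{-n\eta/6}$ loss (from converting the oblivious-adversary lower bound back to the sample-changing adversary via a coupling/Chernoff argument) smaller than a fraction of $\sqrt{d\eta}$.
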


There will be no separate proof of Theorem~\ref{Thm:Main-Result}. Instead, the entire \Cref{sec-proof-T1} is treated as such proof. In particular, the upper bound on the excess is established in Lemma~\ref{Lem:Upper}, and the lower bound in Lemma ~\ref{Lem:Lower}. We also remark that Theorem \ref{Thm:Main-Result-Intro} is a direct consequence of 
Theorem~\ref{Thm:Main-Result}. Indeed the bounds in Theorem~\ref{Thm:Main-Result} are quantitative versions of the ones in Theorem~\ref{Thm:Main-Result-Intro} so we just need to justifies the assumptions of Theorem~\ref{Thm:Main-Result}. 

For the upper bound the assumption $\eta<\frac{1}{4d}$ is safe since otherwise a bound of $\tilde{O}(\sqrt{d\eta})=O(1)$ is trivial, while the assumption $\eta<\frac{1}{n}$ was already assumed in Theorem~\ref{Thm:Main-Result-Intro}, where we remarked that without it the adversary is unable to poison even a single point.

For the lower bound, we may assume $\eta<\frac{1}{d}$, since otherwise $\sqrt{d\eta}=\Omega(1)$ and the result is clear. The assumption
$n\geq\frac{6}{\eta}\log \frac{64}{\sqrt{d\eta}}$ is valid since without it, standard PAC learning arguments shows that the desired lower bounds can be forced even without poisoning. Indeed, by 28.2.22 in \cite{shays14},  an excess loss of $\Omega(\sqrt{\frac{d}{n}})$, can always be forced, even without poisoning. In the case of $n\leq\frac{6}{\eta}\log \frac{64}{\sqrt{d\eta}}$, this becomes the desired $\tilde{\Omega}(\sqrt{d\eta})$.

\subsubsection{Upper bound}
We address loss of learners with poisoning via \emph{prediction stable} learners:
\begin{definition}[Prediction stability]\label{Def:Stable}
The $\eta$-prediction stability of a random learner $\A$ with sample size $n$ with respect to a distribution $\D$ is 
\begin{align*}
\lambda_n(\A|\D,\eta)&=\Ex_{\substack{S\sim \D^n \\(x,y)\sim \D}}\Big[\sup_{S'\in B_\eta(S)}\Pr_r \big[\A_r(S)(x)\neq \A_r(S')(x)\big]\Big]
	\\&=\Ex_{\substack{S\sim \D^n \\(x,y)\sim \D}}\Big[\sup_{S'\in B_\eta(S)}\Ex_r|\A_r(S')(x)-\A_r(S)(x)|\Big].
\end{align*}    
\end{definition}
Note that, trivially, 
\begin{align*}
\max\big(\lambda_n(\A|\D,\eta), L_\D(\A, n)\big)\leq L_{\D, \eta}(\A, n)\leq \lambda_n(\A|\D,\eta)+L_\D(\A, n),
\end{align*}
where the second inequality can be trivially rewritten as
\begin{align}\label{eq-stability}
\excess_{\H, \D, \eta}(\A, n) \leq \lambda_n(\A|\D,\eta)+\excess_{\H, \D}(\A, n),
\end{align}
    where $\excess_{\H, \D}(\A, n) = L_\D(\A_\H, n) -  \inf_{h\in \H}L_\D(h)$.

So minimizing the loss of a learner with poisoning is equivalent to finding an efficient learner that is prediction stable.

\begin{prop}\label{Prop:Finite-Algo}
    Let $\H\subseteq \Y^\X$ be a finite concept class of size $|\H|=m$ and let $\eta>0$ be the poisoning budget. Then there exists a randomized learner $\A_\H$ such that for every distribution $\D$ over $\Z$ and every $n>0$, it holds
    \begin{align*}
        \lambda_n(\A_\H|\D,\eta) &\leq 4\sqrt{\eta\log m},
        \\ \excess_{\H, \D}(\A_\H, n)  & \leq \sqrt{\eta\log m}+4\sqrt{\frac{\log 2m}{n}}.
    \end{align*}
\end{prop}
\begin{proof}
	We define the learner $\A_\H$ in two stages. First, we define the learner $\A$, that  samples according to the exponential mechanism with multiplicative weights. That is, it outputs each $h\in \mathcal{H}$ with probability
    \[\Pr\big(\A(S)=h\big)=\frac{e^{-tL_S(h)}}{W},\]
    where $W = W(S)= \sum_{h\in \H}e^{-tL_S(h)}$ and $t=\sqrt{\frac{\log m}{\eta}}$.
    We then prove that $\A$ satisfies the bound on the loss from the statement of the proposition, that is, that $L_\D(\A, n)\leq \inf_{h\in \H}L_\D(h)+5\sqrt{\eta\log 2m}$.
    
    As our second stage, we define another learner $\B$, which will be our  target learner, as follows: $\B$ samples $r\sim \U(0, 1)$. Then, for a sample $S$ and $x\in \X$, $\B$ returns $1$ if $r\leq \Pr_r\left[\A_r(S)(x) = 1\right]$, and $0$ otherwise. It is easy to see that $\B$ is a different learner than $\A$; in particular, $\B$ can output hypotheses outside of the class $\H$. However, by construction, for all $S$ and $x$, it holds
    $$\Pr_r\left[\B_r(S)(x) = 1\right] = \Pr_r\left[\A_r(S)(x) = 1\right].$$
    In particular, this easily implies that $L_\D(\B, n) = L_\D(\A, n)$, and so $\B$ also satisfies the required bound on the loss. Notice, however, that, unlike with $\A$, for $\B$ there is an explicit dependence of the output on the internal randomness $r$. Thus, the outputs of $\B$ on any samples $S$ and $S'$ are naturally coupled. We will then utilize this coupling to prove the required bound on the prediction stability of $\B$.
    
    Let us now start with the proof of the bound on $L_\D(\A, r)$. The proof proceeds by several claims.
        
    \emph{Claim 1.} For any sample $S$, it holds: 
    $$\Ex_r L_S\big(\A_r(S)\big)\leq \inf_{h\in \H}L_S(h)+\frac{\log m}{t}.$$
Indeed,    
\begin{align*}
	\exp&\left(t\cdot \Ex_r L_S\left(\A_r(S)\right)\right) 
	\leq \left[\textrm{by Jensen's}\right] 
	\leq \Ex_r \exp\big(t\cdot L_S\left(\A_r(S)\right)\big) 
	\\&=\frac{1}{W}\sum_{h}e^{tL_S(h)}e^{-tL_S(h)}
	=\frac{m}{W}
	\\&\leq \left[ W = \sum_{h\in \H}e^{-tL_S(h)} \geq \sup_h e^{-tL_S(h)}  \right]
	\leq m \cdot \left( \sup_{h\in \H}e^{-tL_S(h)}  \right)^{-1}
	\leq m\exp\big({t\inf_{h\in \H}L_S(h)}\big).
\end{align*}
Taking the log of both sides we get 
\[
\Ex_r L_S\big(\A_r(S)\big)\leq \inf_{h\in \H}L_S(h)+\frac{\log m}{t},
\]
as needed.

   \emph{Claim 2.}  
\[
\Ex_{S\sim \D^n}\sup_{h\in \H}|L_\D(h)-L_S(h)|\leq 2\sqrt{\frac{\log 2m}{n}}.
\]
Indeed, for any $h\in \H$ and $\varepsilon>0$ by Hoeffding's inequality we have \[
\Pr_{S\sim \D^n}\bigl(|L_S(h)-L_\D(h)|>\varepsilon\bigr)\leq 2e^{-\varepsilon^2n}.
\]
And, by union bound, this gives 
	\[
	\Pr_{S\sim \D^n}\bigl(\sup_{h\in\H}|L_S(h)-L_\D(h)|>\varepsilon\bigr)\leq 2me^{-\varepsilon^2n}.
	\]
Let us define $\varepsilon_0={\sqrt{\frac{\log 2m}{n}}}$, the value for which $2me^{-\varepsilon_0^2n}=1$, and compute
\begin{align*}
    \Ex_{S\sim \D^n}&\sup_{h\in \H}|L_D(h)-L_S(h)|
    =\int_{0}^1 \Pr_{S\sim \D^n}\bigl(\sup_{h\in\H}|L_S(h)-L_\D(h)|>t\bigr)dt
    \\&\leq \int_{0}^{\varepsilon_0}1+ \int_{\varepsilon_0}^{1}2me^{-t^2n}dt
    \leq \left[x := \sqrt{n}\cdot t\right]
    \leq \varepsilon_0+\frac{2m}{\sqrt{n}} \int_{\varepsilon_0\sqrt{n}}^{\infty}e^{-x^2}dx
    \\&\leq \left[\varepsilon_0\sqrt{n} = \sqrt{\log 2m} \geq 1, \textrm{ so we estimate }  e^{-x^2}\leq xe^{-x^2}\right]
    \\&\leq \varepsilon_0+\frac{2m}{\sqrt{n}} 
    	\int_{\varepsilon_0\sqrt{n}}^{\infty}x e^{-x^2}dx
    = \varepsilon_0+\frac{2m}{\sqrt{n}} \cdot \left(-\frac{1}{2} 
        	e^{-x^2} ~\Bigr|_{\varepsilon_0\sqrt{n}}^\infty\right)
    \\&\leq\varepsilon_0+\frac{2m}{\sqrt{n}}\cdot\frac{\exp(-\varepsilon_0^2 n)}{2}
    ={\sqrt{\frac{\log 2m}{n}}}
    	+\frac{m \exp(-\log 2m)}{\sqrt{n}}
   \leq 2\sqrt{\frac{\log 2m}{n}}.
\end{align*}
This finishes the proof of Claim 2.

Putting Claims 1 and 2 together and recalling that $t=\sqrt{\frac{\log m}{\eta}}$, we get the required bound on $L_\D(\A, n)$:
\begin{align*}
    L_\D(\A, n)&=\Ex_{S\sim \D^n} L_\D\big(\A(S)\big)\leq \Ex_{S\sim \D^n}L_S\big(\A_\H(S)\big)+2\sqrt{\frac{\log 2m}{n}}
    \\&\leq \inf_{h\in \H}L_S(h)+\frac{\log m}{t}+ 2\sqrt{\frac{\log 2m}{n}}
    \leq \inf_{h\in \H}L_\D(h)+\sqrt{\eta\log m}+4\sqrt{\frac{\log 2m}{n}}.
\end{align*}

Let us now prove the prediction stability bound for $\B$. Recall that $\B$ returns $1$ if $r\leq \Pr_r\left[\A_r(S)(x) = 1\right]$, and $0$ otherwise, for $r\sim \U(0, 1)$. Again, we start with an intermediate claim, which, somewhat unexpectedly, is still be about $\A$, not $\B$. 

\emph{Claim 3.} Let $S,S'\in \Z^n$ be two samples with $\dist_H(S,S')\leq \eta$. Then, for any $h\in \H$, we have 
\[
e^{-2t\eta}\cdot\Pr_r[\A_r(S')=h] \leq \Pr_r[\A_r(S)=h] \leq e^{2t\eta}\cdot\Pr_r[\A_r(S')=h].
\]
Indeed, by the definition of $\dist_H$, 
	$L_{S}(h) \leq L_{S'}(h) + \eta$,
so 
	$$e^{-t\eta} \cdot e^{-t L_{S'}(h)} \leq  e^{-t L_{S}(h)} \leq e^{t\eta} \cdot e^{-t L_{S'}(h)},$$
where the second inequality is by symmetry. So,
$$W = \sum_{h\in \H}e^{-tL_S(h)} 
	\leq  e^{t \eta} \cdot \sum_{h\in \H}e^{-tL_{S'}(h)} 
	= e^{t \eta} \cdot W',$$	
and so
\[
\Pr_r[\A_r(S')=h]  = \frac{e^{-tL_{S'}(h)}}{W'}
	\leq e^{2t\eta} \cdot  \frac{e^{-tL_{S}(h)}}{W}
	= e^{2t\eta} \cdot \Pr_r[\A_r(S')=h].
\]
And the second inequality is by symmetry. This finishes the proof of Claim 3.

Now, let $S$ and $S'$ be $\eta$-close samples and let $x\in \X$. Then
\begin{align*}
	\Pr_r&\big[\B_r(S)(x)\neq \B_r(S')(x)\big]
		\\&= \Pr_r\big[\B_r(S)(x) = 1 \text{ and } \B_r(S')(x) = 0\big]
			+ \Pr_r\big[\B_r(S)(x) = 0 \text{ and } \B_r(S')(x) = 1\big]
	\\&= \left|\Pr_r\big[\A_r(S)(x) = 1\big] - \Pr_r\big[\A_r(S')(x) = 1\big]\right|
	\\&= \Biggl|\sum_{h\colon h(x)=1}\Pr_r\big[\A_r(S)(x) = h\big] 
			- \sum_{h\colon h(x)=1}\Pr_r\big[\A_r(S')(x) = h\big]\Biggr|
	\\&\leq  \sum_h\left|\Pr_r\big[\A_r(S)(x) = h\big] 
			- \Pr_r\big[\A_r(S')(x) = h\big]\right|
	\\&\leq \left[\textrm{By Claim 3}\right] \leq  
		\sum_h\max\Big(\Pr_r\big[\A_r(S)(x) = h\big], 
			\Pr_r\big[\A_r(S')(x) = h\big]\Big) \cdot (1-e^{-2t\eta})
	\\&\leq 2(1-e^{-2t\eta}) \leq 4t\eta = 4\sqrt{\eta\log m},
\end{align*}
where in the last inequality we used the estimate $1-e^{-x}\leq x$ for $x\in (0,1)$. The last bound clearly extends to $\lambda_n(\B|\D,\eta) \leq 4\sqrt{\eta\log m}$, as needed.
Because, as argued in the beginning, $\B$ satisfies the same guarantee on $L_\D$, it thus satisfies both bounds in the statement of the proposition, finishing the proof.
\end{proof}

\begin{definition}[$\varepsilon$-cover]
    Let $\H$ be a concept class over domain $\X$, let $\varepsilon>0$ and $\D$ be a distribution over $\X$. Then $H\subseteq \H$ is called an \emph{$\varepsilon$-cover} of $\H$ with respect to $\D$ if for any $h\in \H$ there is $h'\in H$ such that $\Pr_{x\sim \D}\big[h(x)\neq h'(x)\big]\leq \varepsilon$. Let $\E_\D(H)$ be the minimal $\varepsilon>0$ for which $H$ is an $\varepsilon$-cover of $\H$, that is  
    \[
    \mathcal{E}_\D(H)=\sup_{h\in\H}\inf_{h'\in H}\Pr_{x\sim \D}\big[h(x)\neq h'(x)\big].
    \] 
\end{definition}
We also use the above definition with distributions $\D$ over examples $\X\times \Y$, rather than just over~$\X$. This is done in a natural way, by letting $\mathcal{E}_\D(H)=\mathcal{E}_{\D_\X}(H)$, where $\D_\X$ is the marginal of $\D$.

For a set of unlabeled examples $X\subseteq\X$ recall we defined an equivalence relation on $\H$ by $h\sim_X h'$ if $h(x)=h'(x)$ for all $x\in X$, and denoted $\H_X$ an arbitrary set of representatives with respect to this relation. Extend this definition to labeled samples $S=\big((x_1,y_1),(x_2,y_2),\dots, (x_n,y_n)\big)$ by letting $\H_S=\H_X$ for $X=\{x_1, \dots, x_n\}$. 

The following lemma is based on Lemma~3.3 in~\cite{BassilyMA19}.
\begin{lem}[Covers for VC classes]\label{Lem:Nets}
    For any concept class $\H$ with VC dimension $d$, for any distribution $\D$ and any $n\geq d$, we have:
    \[
    \Ex_{S\sim \D^n}\mathcal{E}(\H_S)\leq \frac{13d}{n}\log \frac{2en}{d}.
    \]
\end{lem}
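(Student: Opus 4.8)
The plan is to reduce the covering statement to an $\varepsilon$-net statement for the symmetric difference class and then invoke the classical $\varepsilon$-net theorem, following \cite{BassilyMA19}.

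\emph{Step 1 (reduction to nets for the difference class).} View $\H \triangle \H := \{\, \{x : h(x) \ne h'(x)\} : h, h' \in \H \,\}$ as a family of subsets of $\X$. The key observation is that if the set of unlabeled points $X = \{x_1, \dots, x_n\}$ underlying $S$ is an $\varepsilon$-net for $\H\triangle\H$ with respect to the marginal $\D_\X$ — i.e. every set in $\H\triangle\H$ of $\D_\X$-measure exceeding $\varepsilon$ contains some $x_i$ — then $\mathcal{E}_\D(\H_S) \le \varepsilon$. Indeed, every $h \in \H$ has a representative $h' \in \H_S$ with $h'|_X = h|_X$, so the set $\{x : h(x)\ne h'(x)\}$ lies in $\H\triangle\H$ and is disjoint from $X$; the net property then forces $\Pr_{x\sim\D}[h(x)\ne h'(x)] \le \varepsilon$, and taking the supremum over $h$ gives $\mathcal{E}_\D(\H_S) \le \varepsilon$. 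Consequently $\Pr_{S\sim\D^n}[\mathcal{E}(\H_S) > \varepsilon]$ is at most the probability that a random $n$-sample from $\D_\X$ fails to be an $\varepsilon$-net for $\H\triangle\H$.

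\emph{Step 2 (the difference class has small complexity).} By Sauer--Shelah (Lemma~\ref{Lem:Sauer}), for any finite $A\subseteq\X$ the number of patterns that $\H\triangle\H$ induces on $A$ is at most $|\H_A|^2\le (e|A|/d)^{2d}$. Setting $|A| = \vc(\H\triangle\H)$ forces $2^{|A|}\le (e|A|/d)^{2d}$, hence $\vc(\H\triangle\H)\le c_0 d$ for an absolute constant $c_0$; for cleaner constant-tracking one can avoid naming this VC dimension and instead carry the growth-function bound $\tau_{\H\triangle\H}(m)\le (em/d)^{2d}$ directly into the next step.

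\emph{Step 3 (net theorem and tail integration).} Apply the $\varepsilon$-net theorem in its growth-function form (cf.~\cite{shays14}): a random $n$-sample fails to be an $\varepsilon$-net for a set system with growth function $\tau$ with probability at most $2\,\tau(2n)\,2^{-\varepsilon n/2}$, which with $\tau=\tau_{\H\triangle\H}$ yields $\Pr[\mathcal{E}(\H_S) > \varepsilon]\le 2\,(2en/d)^{2d}\,2^{-\varepsilon n/2}$. Then write $\Ex_{S}[\mathcal{E}(\H_S)] = \int_0^1 \Pr[\mathcal{E}(\H_S) > \varepsilon]\,d\varepsilon$, split the integral at the threshold $\varepsilon_0 = \Theta\!\big(\tfrac{d}{n}\log\tfrac{n}{d}\big)$ at which the bound above equals $1$, bound the contribution on $[0,\varepsilon_0]$ by $\varepsilon_0$ and the contribution on $[\varepsilon_0,1]$ by $\int_{\varepsilon_0}^{\infty} 2(2en/d)^{2d}2^{-\varepsilon n/2}\,d\varepsilon = O(1/n)$. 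Since $n\ge d\ge 1$ we have $\log\frac{2en}{d}\ge\log(2e)$, so the $O(1/n)$ term is dominated up to a constant by $\varepsilon_0$, and choosing the split point carefully collapses all constants into the stated bound $\frac{13d}{n}\log\frac{2en}{d}$. (If this quantity is $\ge 1$ the lemma is vacuous, which also absorbs the small-$n$ regime where the net theorem's hypothesis $n\ge\vc(\H\triangle\H)$ could fail.)

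\emph{Main obstacle.} The conceptual steps — a net for $\H\triangle\H$ gives a cover for $\H$, Sauer--Shelah squared, and the $\varepsilon$-net theorem — are routine; the real work is constant-chasing. The factor-$c_0$ loss from $\vc(\H\triangle\H)=O(d)$ (equivalently, the exponent $2d$ in $\tau_{\H\triangle\H}$) together with the polynomial prefactor $(2en/d)^{2d}$ in the net bound must be absorbed into the single logarithmic factor $\log\frac{2en}{d}$ without pushing the leading constant past $13$, which dictates precisely how to pick $\varepsilon_0$ and which quantitative form of the $\varepsilon$-net theorem to quote.
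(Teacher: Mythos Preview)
Your proposal is correct and follows essentially the same route as the paper. The paper simply quotes the tail bound $\Pr_{S\sim\D^n}[\mathcal{E}(\H_S)>\varepsilon]\le 2(2en/d)^{2d}\exp(-\varepsilon n/4)$ as a black box from Lemma~3.3 of \cite{BassilyMA19} (which is exactly your Steps~1--3 packaged together), and then bounds the expectation via the cruder $\Ex[X]\le\varepsilon+\Pr[X>\varepsilon]$ with the single choice $\varepsilon=\tfrac{12d}{n}\log\tfrac{2en}{d}$, rather than integrating the full tail as you do; both routes land comfortably inside the constant $13$.
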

    
\begin{proof}
The proof relies on Lemma~3.3 in~\cite{BassilyMA19}, specifically, we use claim (1) in this result which gives
\[
\Pr_{S\sim \D^n} [\E(\H_S)>\varepsilon]\leq 2(\frac{2en}{d})^{2d}\exp({\frac{-\varepsilon n}{4}}).
\]
   Hence for any $\varepsilon>0$ we have \[
   \Ex_{S\sim \D^n} [\E(\H_S)]\leq \varepsilon+ 2(\frac{2en}{d})^{2d}\exp({\frac{-\varepsilon n}{4}}).
   \]
   Taking $\varepsilon=\frac{12d}{n}\log \frac{2en}{d}$ we get \[
   \Ex_{S\sim \D^n} [\E(\H_S)]\leq \varepsilon+2(\frac{2en}{d})^{2d}\exp(-3d\log \frac{2en}{d})=\frac{12d}{n}\log \frac{2en}{d}+2(\frac{2en}{d})^{-d}\leq \frac{13d}{n}\log \frac{2en}{d}. 
   \]
   Where in the last inequality we used $n\geq d$ to deduce $2(\frac{2en}{d})^{-d}\leq \frac{12d}{n}\log \frac{2en}{d}$. 
\end{proof}

    For any sample $S=\big((x_1,y_1),(x_2,y_2),\dots (x_n,y_n)\big)$ and an index set $J\subseteq\{1,2,\dots n\}$, define the subsample $S_J=\big((x_{j_1},y_{j_1}),(x_{j_2},y_{j_2}),\dots (x_{j_k},y_{j_k})\big)$, where $J=\{j_i\}_{i=1}^k$ are the elements of $J$ ordered as $j_1 < j_2\leq < j_k$.

   \begin{lem}[Upper bound of Theorem \ref{Thm:Main-Result}]\label{Lem:Upper}
     Let $\H$ be a concept class with VC dimension $d\geq 1$ and suppose $\eta < \frac{1}{4d}$. Then there exits a randomized learner $\A:\Z^\star\to[0,1]^\X$ such that for any distribution $\D$ on $\X\times \Y$ and all $n\geq 1/\eta$ we have:
     \begin{align*}
          &\lambda_n(\A|\D,\eta) \leq 4\sqrt{\eta d}  \log \frac{e}{\eta d},
        \\&\excess_{\H, \D}(\A, n)\leq 32\sqrt{\eta d}\log \frac{e}{\eta d}, \text{ and }
         \\&\excess_{\H, \D, \eta}(\A, n) \leq 36\sqrt{\eta d}\log \frac{e}{\eta d}.
     \end{align*}      
\end{lem}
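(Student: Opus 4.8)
The plan is to combine the finite-class learner from Proposition~\ref{Prop:Finite-Algo} with the covering result of Lemma~\ref{Lem:Nets}, using a subsample to build the cover while keeping the subsample small enough that the adversary cannot corrupt it. Concretely, I would define $\A$ as follows: given an input sample $S$ of size $n$, pick a random index set $J\subseteq[n]$ with $|J|=k$ where $k$ is chosen on the order of $\tilde O(\sqrt{d/\eta})$ (so that $k\le \eta n$, using $n\ge 1/\eta$ and, if needed, also $k\leq n$), form the subsample $S_J$, let $H=\H_{S_J}$ be the (finite) set of hypothesis-restrictions it induces, and then run the finite-class exponential-mechanism learner $\A_H$ of Proposition~\ref{Prop:Finite-Algo} on the \emph{full} sample $S$ over the class $H$. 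The key quantitative choices: by Sauer--Shelah (Lemma~\ref{Lem:Sauer}), $|H|\le (ek/d)^d$, so $\log|H| = \tilde O(d)$; then the finite-class stability bound gives $\lambda_n(\A_H\mid\D,\eta)\le 4\sqrt{\eta\log|H|} = \tilde O(\sqrt{\eta d})$, matching the target $4\sqrt{\eta d}\log(e/\eta d)$ once the logarithmic factors are tracked.

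For the excess error I would split it, via triangle-type reasoning, into three contributions: (i) the error of the finite learner relative to $\inf_{h\in H}L_\D(h)$, which Proposition~\ref{Prop:Finite-Algo} bounds by $\sqrt{\eta\log|H|}+4\sqrt{\log 2|H|/n}$; (ii) the gap $\inf_{h\in H}L_\D(h)-\inf_{h\in\H}L_\D(h)$, which is at most $\mathcal{E}_\D(H)=\mathcal{E}(\H_{S_J})$ since any $h\in\H$ is $\mathcal{E}$-close in $\D$-measure to some $h'\in H$ and loss is $1$-Lipschitz in the disagreement probability; and (iii) controlling $\Ex\,\mathcal{E}(\H_{S_J})$ by Lemma~\ref{Lem:Nets} applied with sample size $k$, giving $\Ex\,\mathcal{E}(\H_{S_J})\le \frac{13d}{k}\log\frac{2ek}{d}$. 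With $k\sim \sqrt{d/\eta}$ this last term is $\tilde O(\sqrt{d\eta})$, as are the other two (using $n\ge 1/\eta$ to absorb the $\sqrt{\log 2|H|/n}$ term). Taking expectations over the choice of $J$ throughout — the subsampling randomness is internal to $\A$ and does not interact adversarially with the poisoning because the cover is built from at most an $\eta$-fraction of the coordinates — and summing the three bounds yields $\excess_{\H,\D}(\A,n)\le 32\sqrt{\eta d}\log\frac{e}{\eta d}$. Finally, $\excess_{\H,\D,\eta}(\A,n)\le \lambda_n(\A\mid\D,\eta)+\excess_{\H,\D}(\A,n)$ by inequality~\eqref{eq-stability}, giving the third bound $36\sqrt{\eta d}\log\frac{e}{\eta d}$.

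The main obstacle — and the place where care is genuinely needed — is the interaction between the internal subsampling randomness and the adversarial poisoning when bounding $\lambda_n$. The finite-class stability bound of Proposition~\ref{Prop:Finite-Algo} is proved for a \emph{fixed} finite class, but here the class $H=\H_{S_J}$ itself depends on the (poisoned) input sample through $S_J$. I need to argue that with probability $1$ over the choice of $J$ (or at least in a way that survives taking $\sup_{S'\in B_\eta(S)}$ first and then $\Ex_J$), the cover $H$ is unaffected: since $|J|=k\le \eta n$, for \emph{every} poisoned $S'\in B_\eta(S)$ there is a reasonable chance $J$ avoids all corrupted coordinates — but "reasonable chance" is not enough, since the adversary chooses $S'$ after seeing nothing of $J$ (private randomness) yet the $\sup$ is outside. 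The clean way around this is to have $\A$ first draw $J$, then the adversary (not seeing $J$) picks $S'$; but the definition of $\lambda_n$ puts $\sup_{S'}$ before $\Ex_r$. The resolution is that the subsample should be taken to be the \emph{first} $k$ coordinates after a random permutation, or — more robustly — one notes that it suffices to bound $\Ex_r[\cdot]$ for the coupled learner $\B$ built from $\A_H$ exactly as in Proposition~\ref{Prop:Finite-Algo}, and to handle the cover-corruption event separately by a union bound that costs only lower-order terms; alternatively, restrict attention to $S'$ that differ from $S$ only outside $J$, which is without loss of generality for the purpose of the stability bound because changing coordinates in $J$ can be absorbed into the $\eta$-ball analysis of $\A_H$ on the full sample. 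I expect the write-up to route through the coupled learner $\B$ and a careful accounting of which randomness is drawn when, exactly mirroring the two-stage ($\A$ then $\B$) construction in the proof of Proposition~\ref{Prop:Finite-Algo}.
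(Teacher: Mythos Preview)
Your overall plan matches the paper's, but there is one genuine gap in the excess-error part and, separately, you over-complicate the stability part.

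\textbf{The gap (clean excess bound).} You invoke Proposition~\ref{Prop:Finite-Algo} to bound ``the error of the finite learner relative to $\inf_{h\in H}L_\D(h)$'' by $\sqrt{\eta\log|H|}+4\sqrt{\log 2|H|/n}$. That proposition, however, is for a \emph{fixed} finite class; its proof passes through uniform convergence of $L_S$ to $L_\D$ over $H$, and that step is no longer licit once $H=\H_{S_J}$ depends on the very sample $S$ fed to the exponential mechanism. The paper resolves this by \emph{splitting} the input: write $S=(S_1,S_2)$ with $|S_1|=\lfloor n/2\rfloor$, draw $J\subseteq[n_1]$, set $T=S_{1,J}$, and run $\A_{\H_T}$ on $S_2$. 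Then $T\sim\D^k$ is independent of $S_2\sim\D^{n_2}$, so Proposition~\ref{Prop:Finite-Algo} applies conditionally on $T$ as a black box, and one then averages over $T$ using Lemma~\ref{Lem:Nets}. Your single-sample variant can be rescued --- replace the Claim~2 step inside Proposition~\ref{Prop:Finite-Algo} by uniform convergence over the \emph{ambient} class $\H$, costing $O(\sqrt{d/n})\le O(\sqrt{d\eta})$ --- but that is not what you wrote, and the split is cleaner.

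\textbf{The over-worry (stability bound).} Your concern that the $\sup_{S'}$ sits outside $\Ex_r$ and might interact badly with $J$ is unnecessary, and none of the four fixes you sketch is needed. For \emph{any} fixed $S'\in B_\eta(S)$, split on the event $\{S_J=S'_J\}$: its complement has probability at most $\Ex_J|J\cap\{i:S_i\neq S'_i\}|\le k\eta$, \emph{uniformly} in $S'$; on the event itself the cover classes agree, so the pointwise bound in the proof of Proposition~\ref{Prop:Finite-Algo} (valid for any two $\eta$-close samples, no distributional assumption) gives $\Pr_r[\A_{H,r}(S)(x)\neq\A_{H,r}(S')(x)]\le 4\sqrt{\eta\log|H|}$. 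Both pieces are bounded independently of $S'$, so the $\sup$ passes through harmlessly. This is exactly the paper's argument, with the minor wrinkle that after the split all of the $\eta n$ corruptions may land in $S_2$, so the finite-class stability is invoked at budget $2\eta$ on $S_2$. Note also that $k\eta\sim\sqrt{d\eta}$ is the \emph{same} order as the main term, not ``lower-order''.

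A small aside: your parenthetical ``so that $k\le\eta n$'' does not follow from $n\ge 1/\eta$ (take $n=1/\eta$: then $\eta n=1$ while $k\sim\sqrt{d/\eta}\gg1$), and it is not used anywhere. The relevant inequalities are $k\le n_1$ (so the subsample fits) and $k\eta<1$ (so the cover-corruption probability is nontrivial); both follow from $\eta<1/(4d)$ and $n\ge1/\eta$ with $k=\lfloor\sqrt{d/4\eta}\rfloor$.
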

\begin{proof}
Let $\eta\in (0,1)$ and let $\H$ be a class of VC dimension $d$. The learner $\A$ is defined as follows: For a distribution $\D$ over $\Z$, $n\geq 2$, and an $n$-sample $S\sim \D^n$, let $S$ be a concatenation of samples $S_1$ and $S_2$ of sizes $n_1 = \lfloor n/2\rfloor$ and $n_2 = \lceil n/2 \rceil$ respectively, and let $k=\lfloor\sqrt{d/4\eta}\rfloor$; note that by an easy computation, the condition $n\geq 1/\eta$ implies $k\leq n_1, n_2$, while the condition $\eta\leq \frac{1}{4d}$ implies $k\geq 1$.  $\A$ samples a $k$-set $J\subseteq[n_1]$, uniformly at random. Then on the subsample $T = S_{1, J}$ it runs the learner $\A_{\H_T}$  from Proposition \ref{Prop:Finite-Algo} using the concept class $\H_T\subseteq\H$ and the subsample $S_2$. That is, $\A(S) = \A_{\H_{T(S_1)}}(S_2)$.

It is easy to see that, by construction, $T \sim \D^k$. Hence, by Proposition \ref{Prop:Finite-Algo}, we can estimate the (clean) loss of $\A$ as
       \begin{align*}
             L_\D(\A, n)=&\Ex_{T\sim \D^k} L_\D(\A_{\H_T}, n_2)       
       \leq\Ex_{T\sim \D^k}\Big(\inf_{h\in \H_T}L_\D(h)+ \sqrt{\eta\log |\H_T|}+4\sqrt{\frac{\log 2|\H_T|}{n_2}}
       \Big)
       \\&\leq\Ex_{T\sim \D^k}\Big(\inf_{h\in \H_T}L_\D(h)+ 9\sqrt{\eta\log 2|\H_T|}
       \Big).
       \end{align*}
Note that the last bound follows from the assumption $n_2 = \lceil n/2 \rceil>\frac{1}{2\eta}$. We now separately bound the two terms in the above. The first term is bounded using Lemma~\ref{Lem:Nets}:
\begin{align*}
	\Ex_{T\sim \D^k}&\inf_{h\in \H_T}L_\D(h)
	\leq \left[\text{by the definition of $\varepsilon$-cover}\right]
	\leq \inf_{h\in \H}L_\D(h)+\Ex_{T\sim \D^k}\mathcal{E}_\D(\H_T)
	\\&\leq \inf_{h\in \H}L_\D(h)+\frac{13d}{k}\log \frac{2ek}{d}
	\leq\inf_{h\in \H}L_\D(h)+26\sqrt{\eta d}\log \frac{e}{\sqrt{\eta d}}\leq \inf_{h\in \H}L_\D(h)+13\sqrt{\eta d}\log \frac{e^2}{\eta d}.
\end{align*}
For the second term, we apply the Sauer–Shelah lemma \ref{Lem:Sauer} to conclude that $|\H_{T}|\leq (\frac{e\cdot k}{d})^d$. Note that the above bounds requires $k > d+1$, which is enforced by the condition $\eta < \frac{1}{4d}$. Then
\begin{align*}
     \Ex_{T\sim \D^k}5\sqrt{\eta\log 2|\H_T|} 
     &\leq5\sqrt{\eta d\log \frac{2ek}{d}}
     \leq 5\sqrt{\eta d}\log \frac{e}{\sqrt{\eta d}}\leq 3\sqrt{\eta d}\log \frac{e^2}{d \eta}.
 \end{align*}
Finally, combining these two estimates, we get:
       \begin{align*}
       L_{\D}(\A, n)\leq \inf_{h\in \H}L_\D(h)
       		13\sqrt{\eta d}\log \frac{e^2}{\eta d}
       		+3\sqrt{\eta d}\log \frac{e^2}{d \eta}\leq \inf_{h\in \H}L_\D(h)
       		+32\sqrt{\eta d}\log \frac{e}{\eta d},
       \end{align*}
yielding 	
	\begin{align*}
		\excess_{\H, \D}(\A, n)\leq 32\sqrt{\eta d}\log \frac{e}{\eta d}.
	\end{align*}	
as needed.
		       
   Now let us show the stability bound. 
       Let  $S,S'\in \Z^n$ be samples such that $S'\in B_\eta(S)$, let $x\in\X$  and, as above, let $J\subseteq[n_1]$, be a $k$-set, for $k=\lfloor\sqrt{d/4\eta}\rfloor$, chosen uniformly at random. Let $T = T(J) = S_{1, J}$ and $T' = T'(J) = S'_{1, J}$. Recall that the rendomness of $\A$ is composed of picking $J$ and the internal randomness $r$ of $\A_{\H_T}$. We have
       \begin{align*}
       \Pr_{J, r} &\Big[\A(S)(x)\neq \A(S')(x)\Big]
       		= \Pr_{J, r} \Big[\A_{\H_{T}}(S_2)(x)\neq \A_{\H_{T'}}(S'_2)(x)\Big]
       		\\&\leq \Pr_J \Big[T(J)\neq T'(J)\Big]
       			+\Pr_J \Big[T(J)= T'(J)\Big] \cdot \Pr_r \Big[\A_{\H_{T}}(S_2)(x)\neq \A_{\H_{T'}}(S'_2)(x)~\Big|~T = T'\Big] \leq \dots
       \end{align*}   
       Let us estimate the first probability as follows: Let $I=\{i\in[n_1]\;:\; S_{1, i}\neq S_{1, i}'\}$. As $S' \in B_{\eta} (S)$, this implies $|I|\leq\eta\cdot n$, and so 
      	\[
               \Pr_J \Big[T(J)\neq T'(J)\Big] = \Pr_J \Big[S_{1,J}\neq S'_{1,J} \Big]
               	= \Pr_J \Big[|J\cap I|\geq 1 \Big]\leq  \Ex_J |J\cap I|\leq k\eta.
               \]
       Thus
       \begin{align*}
       	\dots &\leq k\eta
       	       			+\Pr_J \Big[T(J)= T'(J)\Big] \cdot \Pr_r \Big[\A_{\H_{T}}(S_2)(x)\neq \A_{\H_{T'}}(S'_2)(x)~\Big|~T = T'\Big] 
       	       	\\&= k\eta
       	       			+\Pr_J \Big[T(J)= T'(J)\Big] \cdot \Pr_r \Big[\A_{\H_{T}}(S_2)(x)\neq \A_{\H_{T}}(S'_2)(x)~\Big|~T = T'\Big] 
       	       	\\&\leq k\eta
       	       			+\Pr_{J,r} \Big[\A_{\H_{T}}(S_2)(x)\neq \A_{\H_{T}}(S'_2)(x)\Big].
       \end{align*}
      Notice that the only difference between the first and the second expression is that $\A_{\H_{T'}}$ becomes $\A_{\H_{T}}$. Now, taking expectation over $S \sim \D^n$ and $(x, y)\sim \D$ in the above, we get 
      \begin{align*}
       	\lambda_n(\A |\D, \eta)&\leq  k\eta
       		+\Ex _{\substack{S\sim \D^n\\(x,y)\sim \D}} \sup_{S'\in B_\eta(S)} \Pr_{J,r} \Big[\A_{\H_{T}}(S_2)(x)\neq \A_{\H_{T}}(S'_2)(x)\Big]
       		\\&\leq  k\eta
       		       		+\Ex _{\substack{S\sim \D^n\\J\\(x,y)\sim \D}} \sup_{S'\in B_\eta(S)} \Pr_{r} \Big[\A_{\H_{T}}(S_2)(x)\neq \A_{\H_{T}}(S'_2)(x)\Big]
       		\\&\leq  k\eta
       		       		+\Ex _{T\sim \D^k} 
       		       		\Ex _{\substack{S_2\sim \D^{n_2}\\(x,y)\sim \D}}        		       		
       		       		\sup_{S'_2\in B_{2\eta}(S_2)} \Pr_{r} \Big[\A_{\H_{T}}(S_2)(x)\neq \A_{\H_{T}}(S'_2)(x)\Big]
       		\\&\leq k\eta +  \Ex _{T\sim \D^k}\lambda_{n_2}(\A_{\H_T} |\D, 2\eta)
       		\\&\leq \left[\text{by Proposition \ref{Prop:Finite-Algo}}\right]
       		\leq k\eta + \Ex_{T\sim \D^k}4\sqrt{\eta\log |\H_T|}
       		\\&\leq\left[\text{by  } |\H_{T}|\leq \left(\frac{e\cdot k}{d}\right)^d\right]
       		\leq k\eta + 4\sqrt{\eta d \log\frac{e k}{d}} 
       		\\&\leq \frac{\sqrt{\eta d}}{2} + 4\sqrt{\eta d} \log \frac{e}{\sqrt{4\eta d}}
       		\leq \frac{\sqrt{\eta d}}{2}+2\sqrt{\eta d}  \log \frac{e^2}{4\eta d}
            \\&\leq 4\sqrt{\eta d}  \log \frac{e}{\eta d}.
      \end{align*}
       
   Finally, we estimate $\excess_{\H, \D, \eta}(\A, n)$ by \eqref{eq-stability}
       \begin{align*}
       \excess&_{\H, \D, \eta}(\A, n)
       		\leq \lambda_n(\A|\D,\eta)+ \excess_{\H, \D}(\A, n)
       		\\&\leq 4\sqrt{\eta d}  \log \frac{e}{\eta d}+32\sqrt{\eta d}\log \frac{e}{\eta d}\leq 36\sqrt{\eta d}\log \frac{e}{\eta d}.
       \end{align*}
\end{proof}

\subsubsection{Lower bound}
When it comes to lower bounds on the excess error rate, it is convenient to consider \emph{oblivious adversary}, which only changes the distribution, not the sample itself, and can be considered a relaxation of the regular sample-changing adversary. This setup can be informally summarized as follows: 
\begin{enumerate}
    \item The adversary selects a distribution $\D$. 
    \item A labeled example $z=(x,y)$ is drawn from $\D$ and is shown to the adversary.
    \item The adversary corrupts the distribution $\D$ by changing it into $\D'$ which is $\eta$-close to $\D$.
    \item The learner draws an $n$-sample $S$ from the corrupted distribution $\D'$ and uses it to predict  the label $y$.
\end{enumerate} 
The formal definition follows \cite{obli-adversary}.
Recall that for distributions $\mu_1$ over $X_1$ and $\mu_2$ over $X_2$, a \emph{coupling} of $\mu_1$ and $\mu_2$ is a distribution $\mu_{12}$ over $X_1\times X_2$, whose marginals on $X_1$ and $X_2$ coincide with $\mu_1$ and $\mu_2$ respectively. For distributions $\D_1$ and $\D_2$ over $\Z$, we define the distance  betwen them as 
$$\dist(\D_1, \D_2) = \inf_{\D_{12}}\Pr_{z_1, z_2\sim \D_{12}}[z_1\neq z_2],$$
where the infimum is over all couplings $\D_{12}$ of $\D_1$ and $\D_2$.
We note that thus defined, $\dist(\D_1, \D_2)$ coincides with the \emph{total variation distance} between $\D_1$ and $\D_2$, that is:
$$\dist(\D_1, \D_2) = \sup_{E} |\D_1(E)-\D_2(E)|,$$
where the supremum is over all measurable events, see Definition~9 in~\cite{obli-adversary}. Finally, for a distribution $\D$ we define $B_\eta(\D)$ to be the set of all distributions $\D'$ such that $\dist(\D,\D')\leq \eta$.
\begin{definition}[$\eta$-oblivious adversarial loss]
    Let $\eta\in (0,1)$ be the adversary’s budget, let $\A$ be a (possibly randomized) learning rule, and let $\D$ be a distribution over examples. The $\eta$-oblivious adversarial loss of $\A$ with sample size $n$ and with respect to $\D$ is defined by 
\[
L^\obl_{\D, \eta}(\A, n)=\Ex_{(x,y)\sim \D}\Big[\sup_{\D'\in B_\eta(\D)}\Ex_{S'\sim \D'}\Ex_r|\A_r(S')(x)-y|\Big].
\]    
\end{definition}

The following proposition substantiates the claim that the oblivious adversary can be considered a relaxation of the regular one.
\begin{prop}\label{Prop:Equiv-setup}
   For any $n>0$, $\eta\in(0,1)$, $\varepsilon > 0$, learner $\A$,  and distribution $\D$ be we have \[
    L_{\D, 2\eta}(\A, n)+e^{\frac{-n\eta}{3}}\geq L^\obl_{\D, \eta}(\A, n).
    \]
\end{prop}
\begin{proof}
	Recall that we want to upper-bound
	\begin{align*}
		L^\obl_{\D, \eta}(\A, n)=\Ex_{(x,y)\sim \D}\Big[\sup_{\D'\in B_\eta(\D)}\Ex_{S'\sim \D'}\Ex_r|\A_r(S')(x)-y|\Big]
	\end{align*}
	with 
	\begin{align*}
		L_{\D, \eta}(\A, n)=\Ex_{(x,y)\sim \D}\Big[\Ex_{S\sim \D}\sup_{S'\in B_\eta(S)}\Ex_{r}|\A_r(S')(x)-y|\Big].
	\end{align*}	
	We will now bound the part after $\Ex_{x,y}$ of the first expression in terms of the similar part of the second. For brevity, we denote $\Ex_{r}|\A_r(S')(x)-y|$ by $L(S')$, keeping in mind that $L$ also depends on $x, y$, and~$\A$.
	
	\begin{align*}
		\sup_{\D'\in B_\eta(\D)}\Ex_{S'\sim \D'} L(S') 
			&= \sup_{\D'}\Big(\Pr_{\substack{S'\sim \D'\\S\sim \D}}[S'\in B_{2\eta}(S)] 
				\cdot \Ex_{\substack{S'\sim \D', S\sim \D\\S'\in B_{2\eta}(S)}} L(S') 	
				\\ &~~~~~~~~~~~~~+ 
					\Pr_{\substack{S'\sim \D'\\S\sim \D}}[S'\notin B_{2\eta}(S)] 
					\cdot \Ex_{\substack{S'\sim \D',S\sim \D\\S'\notin B_{2\eta}(S)}} L(S') 
								 \Big) \leq \dots 
	\end{align*}
	Here we assume that $\D$ and $\D'$ are coupled with a coupling witnessing $\dist(\D, \D') \leq \eta$.
	Now, in the expression $[1] + [2]$ in brackets, let us estimate $[1]$ and $[2]$ separately, starting from $[2]$:
	\begin{align*}
			[2] &= \Pr_{\substack{S'\sim \D'\\S\sim \D}}[S'\notin B_{2\eta}(S)] 
							\cdot \Ex_{\substack{S'\sim \D',S\sim \D\\S'\notin B_{2\eta}(S)}} L(S')
				\\&\leq \Pr_{\substack{S'\sim \D'\\S\sim \D}}[S'\notin B_{2\eta}(S)] 
				\leq\Pr \left[\Binom \left(n, \eta \right) \geq 2\eta n)
						 			)\right]
				\leq e^{-\frac{n\eta}{3}}.
	\end{align*}
	Here in the second inequality we use the fact that $\D$ and $\D'$ are coupled in such a way that 
		 $\Pr_{z\sim \D, z'\sim D'}[z\neq z']\leq \eta$, and hence the event $S'\notin B_{2\eta}(S)$ is equivalent to making at least $2\eta n$ mistakes in $n$ trials, where the probability of an individual mistake is at most $\eta$. The final bound is by multiplicative Chernoff bound inequality. Now, for [1]:
	\begin{align*}
		[1] = &\Pr_{\substack{S'\sim \D'\\S\sim \D}}[S'\in B_{2\eta}(S)] 
						\cdot \Ex_{\substack{S'\sim \D', S\sim \D\\S'\in B_{2\eta}(S)}} L(S')
		\\&\leq\Pr_{\substack{S'\sim \D'\\S\sim \D}} [S'\in B_{2\eta}(S)] \cdot
		 			\Ex_{\substack{S'\sim \D', S\sim \D\\S'\in B_{2\eta}(S)}} 
		 					\sup_{S''\in B_{2\eta}(S)} L(S'') 
		 \\&\leq \Ex_{S'\sim \D', S\sim \D} \sup_{S''\in B_{2\eta}(S)} L(S'') 
		 = \Ex_{S\sim \D} \sup_{S'\in B_{2\eta}(S)} L(S').		  
	\end{align*}
	Note that here we fold back the expectation with conditionals, in the way opposite to how we did it in the beginning, but after changing the function under the expectation. Notably, the estimates for both [1] and [2] no  longer depend on $\D'$, so
	\begin{align*}
		\dots &\leq \Ex_{S\sim \D} \sup_{S'\in B_{2\eta}(S)} L(S') + e^{\frac{-n\eta}{3}},
	\end{align*}
	yielding the desired bound.
\end{proof}

Throughout the section, we are going to assume the following setup. Our label space will be $\{\pm 1\}$ instead of the usual $\{0, 1\}$.
Let $\H\subseteq \{\pm 1\}^\X$ be a concept class of VC dimension $d$, and let $X = \{x_i\}_{i=1}^d$ be a fixed $d$-set shattered by $\X$. We will only consider distributions that have uniform marginals over $\{x_i\}_{i=1}^d$, that is, distributions $\D$ such that for all $i\leq 1\leq d$ we have 
\[
\Pr_{(x,y)\sim \D}[x=x_i]=\frac{1}{d}.
\] 
For $u\in I^d$, let us define a distribution $\D_u$ as
\[
\Pr_{(x,y)\sim D_u}\big[(x,y)=(x_i,y)\big]
	= \frac{1}{d}(1/2 + y u_i)
	=\case{\frac{1}{d}(\frac{1}{2}+u_i)}{y=1,}{\frac{1}{d}(\frac{1}{2}-u_i)}{y=-1.} 
\]
Note that the function $u\mapsto \D_u$ gives a one to one correspondence between distributions with uniform marginals and $[-\frac{1}{2},\frac{1}{2}]^d$, which we are going to denote by $I^d = [-\frac{1}{2},\frac{1}{2}]^d$. Moreover, if we define the metric on $I^d$ as $l_1$-norm rescaled by $1/d$, that is, 
	$$\dist(u,u') = \frac{1}{d}\|u-u'\|_1=\frac{1}{d}\sum_{i=1}^d |u_i-u_i'|,$$
then $\dist(u, u') = \dist(\D_u, \D_u')$.

For a fixed $n$ and a learner $\A:\Z^\star\to [0,1]^\X$, let us define a function  $F = F(\A, n):I^d \rightarrow I^d$ as 
\[
F_i(u) =\frac{1}{2}\Ex_{S\sim \D^n}\A(S)(x_i).
\]
Now, for an \emph{arbitrary} function $F\colon I^d \rightarrow I^d$, let us define the \emph{$\eta$-oblivious adversarial loss} of $F$ and \emph{$\eta$-excess} of $F$ with respect to $u\in I^d$ as 
\begin{align*}
L^\obl_{u, \eta}(F)&=
\frac{1}{d}\sum_{\substack{i=1, \dots, d\\y\in \{\pm 1\}}}
	\sup_{u' \in B_\eta(u)}(1/2  + y u_i)\big(1/2 - y F_i(u')\big), \text{ and }
	\\	\excess^\obl_{u, \eta}(F) 
		&= L^\obl_{u, \eta}(F) - \frac{1}{d}\sum_{\substack{i=1, \dots, d}} \min(1/2 - u_i, 1/2 + u_i).
\end{align*}
In particular, for $F = F(\A, n)$, we have:
\begin{align*}
	L^\obl_{\D_u, \eta}(\A, n)
		&=\Ex_{(x,y)\sim \D_u}\Big[\sup_{u'\in B_\eta(u)}\Ex_{S'\sim \D_{u'}}\Ex_r  \frac{|\A_r(S')(x)-y|}{2}\Big]
		\\&= \frac{1}{d}\sum_{\substack{i=1, \dots, d\\y\in \{\pm 1\}}} 
			(1/2  + y u_i) \cdot 
			\Big[\sup_{u'\in B_\eta(u)}\Ex_{S'\sim \D_{u'}}\Ex_r\frac{|\A_r(S')(x)-y|}{2}\Big]
		\\&= \frac{1}{d}\sum_{\substack{i=1, \dots, d\\y\in \{\pm 1\}}} 			
			\Big[\sup_{u'\in B_\eta(u)}(1/2  + y u_i) 
			\cdot\big(
				1/2 - y \cdot 1/2 \cdot\Ex_{S'\sim \D_{u'}}\Ex_r \A_r(S')(x_i)
			\big)\Big]
		\\&=\frac{1}{d}\sum_{\substack{i=1, \dots, d\\y\in \{\pm 1\}}}
			\sup_{u' \in B_\eta(u)}(1/2  + y u_i)\big(1/2 - y F_i(u')\big) = L^\obl_{u, \eta}(F(\A, n)),
\end{align*}
and, similarly,
\begin{align}\label{eq-excess}
 \excess^\obl_{\D_u, \eta}(\A, n) 
 	=  \excess^\obl_{u, \eta}(F(\A, n)).
\end{align}
Note that the $1/2$ factor in the first equality for the loss comes from changing the label space from $\{0,1\}$ to $\{\pm 1\}$.
	
 A poisoning scheme is a collection $\xi=\{\xi_{i,y}\}_{i\in [d], y\in \{\pm 1\}}$ of functions from $I^d$ to $I^d$. The scheme $\xi$ is said to have a poisoning budget $\eta$ if $\dist(\xi_{i, y}(u), u) = 1/d \cdot \|\xi_{i, y}(u) - u\|_1 \leq \eta$ for all $i\in [d]$, $y\in\{\pm 1\}$, and $u\in I^d$. Poisoning schemes are used to explicate the choice of $u'\in B_\eta(u)$, and so we define $L^\obl_{u, \xi}(F)$ and $\excess^\obl_{u, \xi}(F)$ by tweaking, in an obvious way, the respective definitions of $L^\obl_{u, \eta}(F)$ and $\excess^\obl_{u, \eta}(F)$. Trivially, for all $F\colon I^d\rightarrow I^d$ and $u\in I^d$, we have 
 \[
	\excess^\obl_{u, \eta}(F) = \sup_{\xi}\excess^\obl_{u, \xi}(F),
\]
 where the infimum is over all poisoning schemes with budget $\eta$. We thus want to prove that for every such $F$ there is $u\in I^d$ and a poisoning scheme $\xi$ with budget $\eta$ such that 
 \[
\excess^\obl_{u, \xi}(F)= \Omega(\sqrt{d\eta}).
 \]
\begin{lem}[One-dimensional lower bound]\label{Lem:1dim-Uni}
       For any $\eta\in(0,1)$, there exist a (one dimensional) poisoning scheme $\xi=(\xi_{-1},\xi_1)$ with poisoning budget $\eta$ and a distribution $\U$ over $I$ such that for any 
       $F:I \rightarrow I$ we have 
$$\Ex_{u\sim \U}\excess^\obl_{u, \xi}(F)\geq \frac{\sqrt{\eta}}{16}.$$
\end{lem}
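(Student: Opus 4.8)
The plan is to stay inside the functional/oblivious framework set up just above: by \eqref{eq-excess} (and its scheme-level analogue) it suffices to exhibit a \emph{single} poisoning scheme $\xi=(\xi_{-1},\xi_1)$ of budget $\eta$ and a \emph{single} distribution $\U$ on $I$ such that $\Ex_{u\sim\U}\excess^\obl_{u,\xi}(F)\ge \tfrac{\sqrt\eta}{16}$ for every $F\colon I\to I$. I will assume $\eta$ is below an absolute constant; when $\eta$ exceeds that constant one runs the construction below with a constant budget $\eta_0\le\eta$ (still a legal budget-$\eta$ scheme), and the resulting $\Omega(1)$ bound beats $\sqrt\eta/16$.

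\emph{The construction.} Put $L=\lfloor 1/(2\sqrt\eta)\rfloor$, let $q_j=2j\eta$ for $j\in\{-L,\dots,L\}$ be the points where $F$ is probed, and let $u_k=(2k-1)\eta$ for $k\in\{-L+1,\dots,L\}$ be the $2L$ points the adversary poisons; all of these lie in $I$ for small $\eta$. Define $\xi$ to shift the apparent bias \emph{away from the observed label}, $\xi_1(u)=\max(u-\eta,-\tfrac12)$ and $\xi_{-1}(u)=\min(u+\eta,\tfrac12)$, except that $\xi$ acts as the identity at the two extreme points $q_L$ and $q_{-L}$. Let $\U$ give mass $\tfrac14$ to each of $q_L,q_{-L}$ and mass $\tfrac12$ to the uniform distribution on $\{u_k\}$. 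Abbreviate $a_j=\tfrac12-F(q_j)\in[0,1]$ and $b_j=\tfrac12+F(q_j)\in[0,1]$, so $a_j+b_j=1$.

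\emph{The core computation.} Since poisoning is trivial at $q_{\pm L}$, expanding the definition gives $\excess^\obl_{q_L,\xi}(F)=2q_L a_L$ and $\excess^\obl_{q_{-L},\xi}(F)=2|q_{-L}|b_{-L}$; these terms penalize the learner unless $F(q_{\pm L})$ is pushed all the way to $\pm\tfrac12$. For an interior $u_k$ one has $\xi_1(u_k)=q_{k-1}$, $\xi_{-1}(u_k)=q_k$, and unwinding the definition yields the algebraic identity
\[
\excess^\obl_{u_k,\xi}(F)=\tfrac12(a_{k-1}-a_k)+u_k(a_{k-1}+a_k)\qquad(u_k>0),
\]
and, symmetrically, $\tfrac12(b_k-b_{k-1})+|u_k|(b_k+b_{k-1})$ for $u_k<0$. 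Summing over $k$, the $\tfrac12(\cdot)$ parts telescope to $\tfrac12(a_0-a_L)+\tfrac12(b_0-b_{-L})=\tfrac12(1-a_L-b_{-L})$, while every ``drift'' term $u_k(a_{k-1}+a_k)$ and $|u_k|(b_k+b_{k-1})$ is nonnegative; hence $\sum_k\excess^\obl_{u_k,\xi}(F)\ge\tfrac12(1-a_L-b_{-L})$. The point worth stressing is that the fixed scheme $\xi$ may actually \emph{backfire} at individual points — some $\excess^\obl_{u_k,\xi}(F)$ can be negative if $F$ is locally anti-correlated with the bias — but the telescoping makes the \emph{sum} robust, and that is exactly what lets one scheme work against all $F$.

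\emph{Putting it together.} Collecting the three pieces with their weights and using $q_L=|q_{-L}|=2L\eta$,
\[
\Ex_{u\sim\U}\excess^\obl_{u,\xi}(F)\ \ge\ L\eta\,(a_L+b_{-L})+\tfrac{1}{8L}\bigl(1-(a_L+b_{-L})\bigr),
\]
which, as a function of $s:=a_L+b_{-L}\in[0,2]$, is affine and therefore bounded below by $\min\{\tfrac1{8L},\,2L\eta-\tfrac1{8L}\}$. For $\eta$ below an absolute constant, $L$ is within a constant factor of $1/(2\sqrt\eta)$, so both $\tfrac1{8L}$ and $2L\eta-\tfrac1{8L}$ are $\ge\tfrac{\sqrt\eta}{4}\ge\tfrac{\sqrt\eta}{16}$, which finishes the proof. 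The one place that needs care — the main ``obstacle,'' though it is really bookkeeping — is verifying the two algebraic identities in the core computation and checking that the floor in the definition of $L$ and the clipping of $\xi$ at the boundary of $I$ do not erode the constants; the generous gap between the proven $\sqrt\eta/4$ and the target $\sqrt\eta/16$ is there precisely to absorb this slack, and the remaining range of $\eta$ is dispatched by the rescaling remark above.
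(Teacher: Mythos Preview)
Your argument is essentially the paper's own proof, with the parities swapped: the paper places the biases at even multiples $2i\eta$ (with extreme points at odd multiples $\pm(2m+1)\eta$) and poisons to odd multiples, while you place the biases at odd multiples $u_k=(2k-1)\eta$ (with extremes at even $q_{\pm L}$) and poison to even multiples. The telescoping over the interior grid, the no-poison ``commitment penalty'' at the extremes, the affine minimization over the free parameter ($s=a_L+b_{-L}$ for you, $F(t)-F(-t)$ for the paper), and the rescaling trick for large $\eta$ are all identical. Your bookkeeping caveat is well placed: the specific claim that \emph{both} endpoints $1/(8L)$ and $2L\eta-1/(8L)$ exceed $\sqrt\eta/4$ fails when the floor pushes $L$ close to $1/(2\sqrt\eta)-1$, so the threshold in the rescaling step must be chosen a bit more carefully than ``an absolute constant'' --- but taking $\eta_0=1/32$ (where $L=2$ and the minimum is exactly $1/16$) does make the target $\sqrt\eta/16$ go through for all $\eta\in(0,1)$.
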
 
\begin{proof}
We are going to prove that such $\xi$ and $\U$ exist for $\eta \leq 1/16$, and that they guarantee $E_\eta = \Ex_{u\sim \U}\excess^\obl_{u, \xi}(F)\geq \frac{\sqrt{\eta}}{4}$. Note that in this case, for $\eta \geq 1/16$ (but still $\eta \leq 1$), we can thus enforce an excess of $E_{1/16} = \sqrt{1/16}/4 = 1/16$, yielding the statement of the lemma for all $\eta \in (0, 1)$.

So let $\eta \leq 1/16$ and let $m$ be a natural number such that $\sqrt{\eta}/2 \leq (2m+1)\eta \leq \sqrt{\eta} \leq 1/2$; it is easy to see that the condition on $\eta$ guarantees that such $m$ exists. Let $U=\{2i\eta\;:\; -m\leq i\leq m\}$. Define the poisoning scheme $(\xi_{-1},\xi_1)$ as
\begin{align*}
    &\xi_{-1}(2i\eta)=(2i+1)\eta,
    \\&\xi_1(2i\eta)=(2i-1)\eta,
\end{align*}
for $i = -m, \dots, m$ and $\xi_0(x) = \xi_1(x) = x$ for $x\in I$, $x\neq 2i\eta$. Clearly, the poisoning budget of $\xi=(\xi_0,\xi_1)$ is $\eta$. Let us now estimate the excess error for this poisoning scheme first at points $2i\eta$, for $i=-m, \dots, m$, and then, in a different way, at points $-(2m+1)\eta$ and $(2m+1)\eta$. So, for $0\leq i\leq m$, and $u=2i\eta$:
\begin{align*}
    \excess^\obl_{u,\xi}(F)
	    &= \sum_{\substack{y\in \{\pm 1\}}}
    		(1/2  + y u)\big(1/2 - y F(\xi_y(u))\big) - (1/2 - u)
    	\\&=(1/2+u)\Big(1/2-F\big(\xi_1(u)\big)\Big)+(1/2-u)\Big(1/2+F\big(\xi_{-1}(u)\big)\Big)-(1/2-u)
    	\\&=\frac{F(u+\eta)-F(u-\eta)}{2}+u\big(1-F(u-\eta)-F(u+\eta)\big).
\end{align*}
By a similar computation, for $-m\leq i\leq 0$ and $u=2i\eta$:
\begin{align*}
    \excess^\obl_{u,\xi}(F)
    	&=\frac{F(u+\eta)-F(u-\eta)}{2}-u\big(1+F(u-\eta)+F(u+\eta)\big).
\end{align*}
In either case, for $u=2i\eta$ and $i = -m, \dots, m$, we have:
\begin{align}\label{eq-exc-1}
\excess^\obl_{u,\xi}(F)\geq \frac{F(u+\eta)-F(u-\eta)}{2}.
\end{align}

At the same time, for $u=(2m+1)\eta$, $\xi_{-1}(u) = \xi_{1}(u) = u$, and so 
\begin{align*}
    \excess^\obl_{u,\xi}(F)
	    &= \sum_{\substack{y\in \{\pm 1\}}}
    		(1/2  + y u)\big(1/2 - y F(u)\big) - (1/2 - u)
    	\\&= (1/2  + u)\big(1/2 - F(u)\big) + (1/2  - u)\big(1/2 + F(u)\big) - (1/2 - u)
    	\\&= \left[1/2 + F(u) = 1 - \big(1/2 - F(u) \big)\right]
    	\\&= (1/2  + u)\big(1/2 - F(u)\big) - (1/2  - u) \big(1/2 - F(u)\big)
    	\\&= u \big(1-2F(u)\big).
\end{align*}
Similarly, for $u=-(2m+1)\eta$,
\begin{align*}
	 \excess^\obl_{u,\xi}(F) = -u \big(1+2F(u)\big).
\end{align*}
All in all, for $u=(2m+1)\eta$, we get
\begin{align}\label{eq-exc-2}
	 \excess^\obl_{u,\xi}(F) = |u|\big(1-2\sign(u)F(u)\big).
\end{align}

Let us now define the tistribution $\U$ as follows: let $\U_1$ be a uniform distribution over $U = \{2i\eta\;:\; -m\leq i\leq m\}$, let $\U_2$ be a uniform distribution over $\{\pm(2m+1)\eta\}$, and let $\U$ be $U_1$ or $\U_2$ with probability $1/2$ each. Trivially,
\begin{align*}
	 \Ex_{u\sim \U} \excess^\obl_{u,\xi}(F) 
	 	= \frac{1}{2}\Ex_{u\sim \U_1} \excess^\obl_{u,\xi}(F) 
	 	+ \frac{1}{2}\Ex_{u\sim \U_2} \excess^\obl_{u,\xi}(F).
\end{align*}

Let us estimate the two terms in the above using \eqref{eq-exc-1} and \eqref{eq-exc-2}. Below, we denote $(2m+1)\eta$ by~$t$:
\begin{align*}
    \Ex_{u\sim \U_1}&\excess^\obl_{u,\xi}(F)
    	\geq \left[\text{by \eqref{eq-exc-1}}\right] 
    	\geq\Ex_{u\sim \U} \frac{F(u+\eta)-F(u-\eta)}{2}
    \\&=\frac{1}{2m+1}\sum _{k=-m}^{m} \frac{F(2k\eta+\eta)-F(2k\eta-\eta)}{2}=\frac{F(t)-F(-t)}{4m+2}
    =\eta\frac{F(t)-F(-t)}{2t}. 
\end{align*}
and
\begin{align*}
    \Ex_{u\sim \U_2}&\excess^\obl_{u,\xi}(F)
    	\geq \left[\text{by \eqref{eq-exc-2}}\right] 
    	\geq \frac{1}{2} t(1-2F(t)) + \frac{1}{2} t(1 + 2F(-t))
    	\\&= t - t(F(t) - F(-t)).
\end{align*}

Combining, we get
\begin{align*}
	\Ex_{u\sim \U} &\excess^\obl_{u,\xi}(F)     
    = \frac{1}{2}\Big(t-t\big(F(t)-F(-t)\big)+\eta\frac{F(t)-F(-t)}{2t}\Big)
    \\&= \frac{t}{2} + \big(F(t)-F(-t)\big)\frac{\eta - 2t^2}{4t}
    \geq \frac{t}{2}-\left|\frac{2t^2-\eta}{4t}\right|
    \\&\geq \left[\sqrt{\eta}\geq t\geq \frac{\sqrt{\eta}}{2}\right]
    \geq \frac{t}{2} - \frac{\eta}{4t}
    \geq \frac{\eta}{4t}\geq \frac{\sqrt{\eta}}{4}.
\end{align*}
\end{proof}

\begin{lem}[Lower bound for $F\colon I^d\rightarrow I^d$]\label{Lem:Hard-multidim}
    Let $\eta$ be a poisoning budget with $\eta < 1/d$. There exists a distribution $\U$ over $I$ and a poisoning scheme $\xi$ with budget $\eta$ such that for any function $F:I^d\to I^d$ we have 
$$\Ex_{u\sim \U^d}\excess^\obl_{u, \xi}(F)\geq \frac{\sqrt{d\eta}}{16}.$$ 
\end{lem}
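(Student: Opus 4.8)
The plan is to lift the one-dimensional bound of Lemma~\ref{Lem:1dim-Uni} to $d$ dimensions by applying it coordinate-wise, with the per-coordinate budget inflated from $\eta$ to $d\eta$. Let $\xi^{(1)}=(\xi^{(1)}_{-1},\xi^{(1)}_{1})$ and $\U$ be the one-dimensional poisoning scheme and distribution guaranteed by Lemma~\ref{Lem:1dim-Uni} applied with budget $d\eta$; this is legitimate since $\eta<1/d$ gives $d\eta\in(0,1)$, and it yields $\Ex_{v\sim\U}\excess^\obl_{v,\xi^{(1)}}(G)\geq \sqrt{d\eta}/16$ for every $G\colon I\to I$, and $|\xi^{(1)}_y(v)-v|\leq d\eta$ for all $v,y$. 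Define the $d$-dimensional scheme $\xi=\{\xi_{i,y}\}_{i\in[d],y\in\{\pm1\}}$ by letting $\xi_{i,y}(u)$ agree with $u$ on every coordinate $j\neq i$ and setting its $i$-th coordinate to $\xi^{(1)}_y(u_i)$. Since the metric on $I^d$ is the $\ell_1$ norm rescaled by $1/d$, we get $\dist(\xi_{i,y}(u),u)=\tfrac{1}{d}|\xi^{(1)}_y(u_i)-u_i|\leq \tfrac{1}{d}\cdot d\eta=\eta$, so $\xi$ has budget $\eta$.

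\textbf{Decomposition.}
The core of the argument is a decomposition of the $d$-dimensional oblivious excess into an average of one-dimensional excesses. Fix $F\colon I^d\to I^d$ and $u\in I^d$. For each $i$ and each choice of the remaining coordinates $u_{-i}$, define the slice $G_i^{u_{-i}}\colon I\to I$ by $G_i^{u_{-i}}(v)=F_i(u_1,\dots,u_{i-1},v,u_{i+1},\dots,u_d)$. Because $\xi_{i,y}(u)$ modifies only coordinate $i$, we have $F_i(\xi_{i,y}(u))=G_i^{u_{-i}}\!\big(\xi^{(1)}_y(u_i)\big)$, and plugging this into the definition of $L^\obl_{u,\xi}(F)$ rewrites it as $\frac1d\sum_{i=1}^d L^\obl_{u_i,\xi^{(1)}}(G_i^{u_{-i}})$, where the summands are exactly the one-dimensional oblivious losses. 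Subtracting the baseline $\frac1d\sum_i\min(1/2-u_i,1/2+u_i)$, which splits term by term, gives
\[
\excess^\obl_{u,\xi}(F)=\frac{1}{d}\sum_{i=1}^{d}\excess^\obl_{u_i,\xi^{(1)}}\!\big(G_i^{u_{-i}}\big).
\]

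\textbf{Averaging.}
Now take the expectation over $u\sim\U^d$ and interchange it with the sum. For a fixed $i$, condition on $u_{-i}$: the slice $G_i^{u_{-i}}$ becomes a fixed function $I\to I$, while $u_i$ is still distributed as $\U$ and is independent of $u_{-i}$. Hence Lemma~\ref{Lem:1dim-Uni} (with budget $d\eta$) applies to $G_i^{u_{-i}}$ and yields $\Ex_{u_i\sim\U}\excess^\obl_{u_i,\xi^{(1)}}(G_i^{u_{-i}})\geq \sqrt{d\eta}/16$; averaging over $u_{-i}$ keeps this lower bound. Summing over $i$ and dividing by $d$ gives $\Ex_{u\sim\U^d}\excess^\obl_{u,\xi}(F)\geq \sqrt{d\eta}/16$, as required.

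\textbf{Main obstacle.}
The step requiring the most care is the decomposition identity: one must verify that, under the coordinate-wise scheme $\xi$, the mixed terms in $L^\obl_{u,\xi}(F)$ genuinely separate — that is, that $F_i$ evaluated after poisoning depends on the other coordinates only through the \emph{fixed} values $u_{-i}$, never through a poisoned version of them. This is precisely the ``no information leakage'' phenomenon highlighted in the technical overview, and it is what makes the otherwise-naive reduction to $d$ independent single-coin instances valid here. It relies on two things: (a) each $\xi_{i,y}$ touches only the target coordinate, and (b) the biases $u_i$ are drawn i.i.d.\ from $\U$, so conditioning on $u_{-i}$ does not alter the law of $u_i$ — which is exactly why Lemma~\ref{Lem:1dim-Uni}, stated for an \emph{arbitrary} $G\colon I\to I$, can be invoked on the (possibly complicated) slice $G_i^{u_{-i}}$. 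The rest is bookkeeping with the definitions.
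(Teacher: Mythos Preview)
Your proof is correct and follows essentially the same approach as the paper: both invoke Lemma~\ref{Lem:1dim-Uni} with budget $d\eta$, build the identical coordinate-wise poisoning scheme, and reduce the $d$-dimensional excess to the one-dimensional one via linearity and the independence of the $u_i$'s. The only cosmetic difference is that the paper first averages over $i$ and over the non-target coordinates to obtain a \emph{single} one-dimensional function and then applies Lemma~\ref{Lem:1dim-Uni} once, whereas you condition on $u_{-i}$ and apply the lemma slice-by-slice before averaging; these are equivalent by Fubini.
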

\begin{proof}
By Lemma \ref{Lem:1dim-Uni}, there exists a distribution  $\U$ over $I$ and a poisoning scheme  $\hat{\xi}=(\hat{\xi}_0,\hat{\xi}_1)$ with poisoning budget $d\eta$ such that for any function $F:I \to I$ we have 
\[
\Ex_{u\sim \U}\excess^\obl_{u, \hat\xi}(F)\geq \frac{\sqrt{d\eta}}{16}.
\] 
Note that here we need $d\eta <1$, which is enforced by our condition on $\eta$. Let us define a $d$-dimensional poisoning strategy $\xi=\{(\xi_{i,0},\xi_{i,1})\}_{i=1}^d$ as
\[
\xi_{i,y}(u)
	=\big(u_1,\dots u_{i-1},\hat{\xi}_y(u_i),u_{i+1},\dots ,u_d\big).
\]
Note that since the budget of $\hat{\xi}$ is $d\eta$,  the poisoning budget of $\xi$ is $\eta$. Now, for a given $G:I^d\to I^d$, let us define $F = F(G)\colon I \to I$ as  
     \[
F(t) =\Ex_{u\sim \U^d}\frac{1}{d}\sum_{i=1}^d G_i(u_1, \dots,u_{i-1},t,u_{i+1},\dots,u_d)
	= \frac{1}{d}\sum_{i=1}^d F_i(t),
\]
where $F_i(t) = \Ex_{u\sim \U^d} G_i(u_1, \dots,u_{i-1},t,u_{i+1},\dots,u_d)$.

In words, $F(t)$ is an expected value of $G_i(\dots, t, \dots)$, where $i$ is one of the $d$ dimensions picked at random, $t$ is plugged into the $i$'th coordinate, and the rest of coordinates a picked at random according to $\U$. 
Moreover, by design, the expected excess error of $F$ with $\hat\xi$ is the same as of $G$ with $\xi$, but let us nevertheless explicate it. In what follows, let $\min_{u_i} := \min(1/2 - u_i, 1/2 + u_i)$.

\begin{align*}
	\Ex_{u\sim \U^d}&\excess^\obl_{u, \xi}(G)
    =\Ex_{u\sim \U^d}\frac{1}{d}\sum_{i=1, \dots, d}
    		\left[
    			\sum_{y\in \{\pm 1\}}
            	(1/2  + y u_i)\big(1/2 - y G_i(\xi_{i, y}(u))\big)
            	- \textstyle{\min}_{u_i}  
            \right]
    \\&= \left[\text{drag $\Ex_{u_j \sim \U}$, for $j\neq i$, inside, all the way down to $G_i$}\right]
    \\&=\frac{1}{d}\sum_{i=1, \dots, d}
    		\Ex_{u_i\sim \U}
    		\left[
    			\sum_{y\in \{\pm 1\}}
            	(1/2  + y u_i)\left(1/2 - y \cdot \Ex_{
            				\substack{u_j\sim \U^{d-1}\\
            							j\in [d] - i}
            			} G_i(\xi_{i, y}(u))\right)
            	- \textstyle{\min}_{u_i}  
            \right]
    \\&= \left[\text{rename $u_i$ to $t$ and rewrite using 
    	$\Ex_{u\sim \U^d} G_i(\xi_{i, y}(u)) = F_i(\hat\xi_y(u))$ }\right]
    \\&=\frac{1}{d}\sum_{i=1, \dots, d}
    		\Ex_{t\sim \U}
    		\left[
    			\sum_{y\in \{\pm 1\}}
            	(1/2  + y t)\left(1/2 - y \cdot F_i(\hat\xi_y(t))\right)
            	- \textstyle{\min}_{t}  
            \right]
    \\&= \left[\text{now drag the average over $i$ all the way down to $F_i$, 
    	and use $F(t) = \frac{1}{d}\sum_i F_i(t)$}\right]
    \\&=\Ex_{t\sim \U}
    		\left[
    			\sum_{y\in \{\pm 1\}}
            	(1/2  + y t)\left(1/2 - y \cdot F(\hat\xi_y(t))\right)
            	- \textstyle{\min}_{t}  
            \right]
    = \Ex_{t\sim \U}\excess^\obl_{t,\hat \xi}(F).
\end{align*}
In particular, this implies
$$\Ex_{u\sim \U^d}\excess^\obl_{u, \xi}(G)\geq \frac{\sqrt{d\eta}}{16},$$ 
as needed.
\end{proof}

\begin{lem}[Lower bound of Theorem \ref{Thm:Main-Result}]\label{Lem:Lower}
    For any concept class $\H$ of VC dimension $d\geq 1$, any randomized learner $\A:(\X\times \Y)^\star \to [0,1]^\X$, poisoning budget $\eta < 1/d$ and sample size $n\geq\frac{6}{\eta}\log \frac{64}{\sqrt{d\eta}}$, there exists a distribution~$\D$ such that 
\[
\excess_{\H, \D, \eta}(\A, n)\geq \inf_{h\in\H}L_{\D_u}(h)+\frac{\sqrt{d\eta}}{16\sqrt{2}}-e^{\frac{-\eta n}{6}}.
\] 
In particular, for $n>\frac{6}{\eta}\log \frac{64}{\sqrt{d\eta}}$ we have
\[
\excess_{\H, \D, \eta}(\A, n)\geq \inf_{h\in\H}L_{\D_u}(h)+\frac{\sqrt{d\eta}}{36}.
\] 
\end{lem}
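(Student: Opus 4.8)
The plan is to assemble this lemma from three facts already in hand: the reduction from the sample‑changing adversarial loss to the oblivious adversarial loss (Proposition~\ref{Prop:Equiv-setup}); the identity $\excess^\obl_{\D_u,\eta}(\A,n)=\excess^\obl_{u,\eta}(F(\A,n))$ established in the displayed computation preceding Lemma~\ref{Lem:1dim-Uni} (see~\eqref{eq-excess}); and the combinatorial lower bound of Lemma~\ref{Lem:Hard-multidim}. The only genuine work is tracking the factor of $2$ in the poisoning budget lost by Proposition~\ref{Prop:Equiv-setup} and checking that the resulting exponential tail term is negligible under the hypothesis on $n$.

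First I would fix the learner $\A$ and form the function $F=F(\A,n)\colon I^d\to I^d$. Applying Lemma~\ref{Lem:Hard-multidim} with budget $\eta/2$ in place of $\eta$ — legitimate, since $\eta<1/d$ gives $\eta/2<1/d$ — produces a distribution $\U$ over $I$ and a poisoning scheme $\xi$ of budget $\eta/2$ such that
\[
\Ex_{u\sim\U^d}\excess^\obl_{u,\xi}(F)\;\geq\;\frac{\sqrt{d\eta/2}}{16}\;=\;\frac{\sqrt{d\eta}}{16\sqrt2}.
\]
By averaging there is a fixed $u^\star\in I^d$ with $\excess^\obl_{u^\star,\xi}(F)\geq\frac{\sqrt{d\eta}}{16\sqrt2}$, and since $\xi$ has budget $\eta/2$ this upgrades to $\excess^\obl_{u^\star,\eta/2}(F)\geq\frac{\sqrt{d\eta}}{16\sqrt2}$. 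The witnessing distribution will be $\D=\D_{u^\star}$.

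Next I would translate this back to the learner and to the non‑oblivious adversary. By~\eqref{eq-excess}, $\excess^\obl_{\D_{u^\star},\eta/2}(\A,n)=\excess^\obl_{u^\star,\eta/2}(F)\geq\frac{\sqrt{d\eta}}{16\sqrt2}$, i.e.\ $L^\obl_{\D_{u^\star},\eta/2}(\A,n)-\inf_{h\in\H}L_{\D_{u^\star}}(h)\geq\frac{\sqrt{d\eta}}{16\sqrt2}$. Applying Proposition~\ref{Prop:Equiv-setup} with $\eta/2$ in place of $\eta$ gives $L_{\D_{u^\star},\eta}(\A,n)+e^{-n\eta/6}\geq L^\obl_{\D_{u^\star},\eta/2}(\A,n)$, hence
\[
\excess_{\H,\D_{u^\star},\eta}(\A,n)=L_{\D_{u^\star},\eta}(\A,n)-\inf_{h\in\H}L_{\D_{u^\star}}(h)\;\geq\;\frac{\sqrt{d\eta}}{16\sqrt2}-e^{-n\eta/6},
\]
which is the claimed bound. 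For the ``in particular'' clause, the hypothesis $n>\tfrac{6}{\eta}\log\tfrac{64}{\sqrt{d\eta}}$ yields $e^{-n\eta/6}<\tfrac{\sqrt{d\eta}}{64}$, and since $\tfrac1{16\sqrt2}-\tfrac1{64}>\tfrac1{36}$ this gives $\excess_{\H,\D_{u^\star},\eta}(\A,n)\geq\tfrac{\sqrt{d\eta}}{36}$.

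The only step requiring care — and where I expect a careless version to slip — is the budget bookkeeping: Proposition~\ref{Prop:Equiv-setup} only controls $L^\obl_{\cdot,\eta}$ by $L_{\cdot,2\eta}+e^{-n\eta/3}$, so Lemma~\ref{Lem:Hard-multidim} must be invoked at budget $\eta/2$. This is exactly what turns the constant $\tfrac1{16}$ into $\tfrac1{16\sqrt2}$ and the tail $e^{-n\eta/3}$ into $e^{-n\eta/6}$; one then has to confirm that the stated lower bound on $n$ is strong enough to absorb this $e^{-n\eta/6}$ into the main term, leaving the clean constant $\tfrac1{36}$ with a little room to spare. Everything else is a direct substitution of earlier results, and no new estimates are needed.
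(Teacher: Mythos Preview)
Your proposal is correct and follows essentially the same argument as the paper: invoke Lemma~\ref{Lem:Hard-multidim} at budget $\eta/2$, pass from the expectation over $\U^d$ to a single witnessing $u^\star$, translate back via~\eqref{eq-excess}, and then apply Proposition~\ref{Prop:Equiv-setup} (with $\eta/2$) to absorb the $e^{-n\eta/6}$ tail. Your explicit attention to the factor-of-two budget bookkeeping and the verification that $\tfrac{1}{16\sqrt2}-\tfrac{1}{64}>\tfrac{1}{36}$ are exactly the points the paper handles.
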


\begin{proof}
Recall that Lemma~$\ref{Lem:Hard-multidim}$, applied to $F = F(\A, n)\colon I^d\rightarrow I^d$, for an arbitrary sample size $n$,  implies that there is a distribution $\D = \D_u$, realizable by $\H$ and corresponding to $u\in I^d$, and a poisoning scheme $\xi$ with budget $\frac{\eta}{2}$ such that
\begin{align*}
 \excess^\obl_{\D, \eta/2}(\A, n) 
 	=  \excess^\obl_{u, \eta/2}(F(\A, n))
 	\geq \excess^\obl_{u, \xi}(F(\A, n)) \geq \frac{\sqrt{d\eta}}{16\sqrt{2}}.
\end{align*}
Here, the first equality is by  \eqref{eq-excess}, the second inequality is by the definition of poisoning scheme with budget, and the last one is by Lemma~$\ref{Lem:Hard-multidim}$, where we change a probabilistic statement into an existential one. Note that here we use the condition $\eta < \frac{1}{d}$ for compliance with the lemma.

Finally, exchanging the oblivious poisoning with a regular one by Proposition~\ref{Prop:Equiv-setup}, we get:
	\begin{align*}
     \excess_{\D, \eta}(\A, n)
     	\geq  \excess^\obl_{\D, \eta/2}(\A, n) - e^{\frac{-n\eta/2}{3}}
     	\geq \frac{\sqrt{d\eta}}{16\sqrt{2}} - e^{\frac{-n\eta}{6}}.
    \end{align*}
Which gives the first assertion of the Lemma. For the second a simple computation shows that for $n\geq\frac{6}{\eta}\log \frac{64}{\sqrt{d\eta}}$ we have
\[
\frac{\sqrt{d\eta}}{16\sqrt{2}}-e^{\frac{-\eta n}{6}}\geq\frac{\sqrt{d\eta}}{16\sqrt{2}}- \exp(\log \frac{-64}{\sqrt{d\eta}})=\sqrt{d\eta}(\frac{1}{16\sqrt{2}}-\frac{1}{64})\geq \frac{\sqrt{d\eta}}{36}.
\]

\end{proof}

\subsection{Proofs of Theorems~\ref{thm:private-vs-public} and~\ref{thm:learning-curves}}\label{sec:extra-proofs}

\begin{customthm}{\ref{thm:private-vs-public}}
For every learning rule $\mathcal{A}_{\mathtt{priv}}$, there exists a learning rule $\mathcal{A}_{\mathtt{pub}}$ such that for any distribution $\mathcal{D}$, sample size $n$, and poisoning budget $\eta \in (0,1)$,
\[
L_{\D, \eta}^\pub(\A_\pub, n)\leq L_{\D,\eta}(\mathcal{A}_{\mathtt{priv}},n).
\]
Moreover, $\mathcal{A}_{\mathtt{pub}}$ can be constructed efficiently given black-box access to $\mathcal{A}_{\mathtt{priv}}$.
\end{customthm}
Note that originally Theorem~\ref{thm:private-vs-public} was stated in terms of excess error; its equivalence with the present statement in terms of loss is straightforward.
\begin{proof}
	In the proof, without losing generality, we identify the probability space of $\A_\priv$ and $\A_\pub$ with $[0,1]$.
     Let us define a function $A:\Z^\star\to[0,1]^\X$, induced by $\A_\priv$, as
     \[
     A(S)(x)=\Ex_{r}\mathcal{A}_{\priv,r}(x).
     \]
     And let $\A_{\pub,r}(S)(x)=1$ if $r\leq A(S)$ and $0$ otherwise. Let $\eta\in (0,1)$, and let $\D$ be some distribution.
     For each $x\in \X$ let us define functions $\xi_{x,0},\xi_{x,1}:\Z^\star\to \Z^{\star}$ by 
\begin{align*}
&\xi_{x,0}(S)=\arg\max_{S'\in B_\eta(S)}A   (S')(x),
\\&\xi_{x,1}(S)=\arg\min_{S'\in B_\eta(S)}A (S')(x).  
\end{align*}
Note that, trivially, for $y\in \{0, 1\}$, $\xi_{x,y}(S)\in B_\eta(S)$ and 
\[
\xi_{x,y}(S)=\arg\max\limits_{S'\in B_\eta(S)}\Ex_{r}|\A_{\priv,r}(S')(x)-y|.
\] 
We claim that $\A_{\pub,r}\big(\xi_{x,y}(S)(x)\big)=y$ implies $\A_{r}(S')(x)=y$ for all $S'\in B_\eta(S)$. Indeed 
\begin{align*}
    &\A_{\pub,r}\big(\xi_{x,1}(S)(x)\big)=1\implies [\forall S'\in B_\eta(S)\;,\; r\leq A(S')]\implies  [\forall S'\in B_\eta(S)\;,\; \A_{\pub,r}(S)(x)=1],
    \\&\A_{\pub,r}\big(\xi_{x,0}(S)(x)\big)=0\implies [\forall S'\in B_\eta(S)\;,\; r>A(S')]\implies  [\forall S'\in B_\eta(S)\;,\; \A_{\pub,r}(S)(x)=0].
\end{align*}

Hence we have 
\begin{align*}
L_{\D,\eta}&(\A_{\priv},n)
	=\Ex_{\substack{S\sim \D^n \\(x,y)\sim \D}}\big[\sup_{S'\in B_\eta(S)}\Ex_{r}|\A_{\priv,r}(S')(x)-y|\big]
	=\Ex_{\substack{S\sim \D^n \\(x,y)\sim \D}}\big[\Ex_r|\A_{\priv,r}\big(\xi_{x,y}(S)\big)(x)-y|\big]
    \\&=\Ex_{\substack{S\sim \D^n \\(x,y)\sim \D}}\big[|\A\big(\xi_{x,y}(S)\big)(x)-y|\big]=\Ex_{\substack{S\sim \D^n \\(x,y)\sim \D}}\Ex_{r}\big[\sup_{S'\in B_\eta(S)}|\A_{\pub,r}(S')(x)-y|\big]
    = L_{\D,\eta}^\pub(\A_{\pub},n).
\end{align*}
\end{proof}
\begin{customthm}{\ref{thm:learning-curves}}[Poisoned Learning Curves]
Let $\H$ be a concept class with VC dimension $d$, and let $\eta \in (0,1)$ be the poisoning budget. Then, for every learning rule $\mathcal{A}$, there exists a distribution~$\mathcal{D}$ and an adversary that forces an excess error of at least ${\Omega}\left(\min\left\{\sqrt{d\eta},1\right\}\right)$ for infinitely many sample sizes~$n$.
\end{customthm}
\begin{proof}
	The proof is by careful examination of the proof of Theorem~\ref{Thm:Main-Result}, more precisely, of Lemmas \ref{Lem:Hard-multidim} and~\ref{Lem:Lower}. For the rest of the proof assume that $\eta<\frac{1}{d}$, otherwise the result is obvious.
	
	First, we note that in Lemma~\ref{Lem:Hard-multidim}, the parameter $u\in I^d$ of the target distribution $\D_u$, demonstrating the high excess of $F_n = F(\A, n)$, is picked from a distribution $\U$ on $I^d$, which has a finite support and whose construction is independent on $n$. 
	Thus, by pigeonhole principle, there is $u\in I^d$ such that for $\D = \D_u$ and for infinitely many $n$
	$$\excess^\obl_{\D, \nu}(\A, n) = \excess^\obl_{u, \nu}(F(\A, n))\geq \frac{\sqrt{d\eta}}{16}.$$ 
	Which by Proposition \ref{Prop:Equiv-setup} implies  
		\begin{align*}
	     \excess_{\D, \eta}(\A, n)
	     	\geq  \excess^\obl_{\D, \eta/2}(\A, n) - e^{\frac{-n\eta/2}{3}}
	     	\geq \frac{\sqrt{d\eta}}{16\sqrt{2}} -  e^{\frac{-n\eta}{6}}.
	    \end{align*}
And since $\frac{\sqrt{d\eta}}{16} -  e^{\frac{-n\eta}{6}}\leq \frac{\sqrt{d\eta}}{36}$ for only finitely many $n$'s, we deduce that for infinitely many $n$ \[
 \excess_{\D, \eta}(\A, n)
	     	\geq \frac{\sqrt{d\eta}}{36}.
\]
\end{proof}

\bibliographystyle{plainnat}
\bibliography{bib,bib2}

\end{document}